\newcommand\BibTeX{{\rmfamily B\kern-.05em \textsc{i\kern-.025em b}\kern-.08em
T\kern-.1667em\lower.7ex\hbox{E}\kern-.125emX}}
\newtheorem{lemma}{Lemma}[section]
\newtheorem{proposition}{Proposition}[section]
\newtheorem{corollary}{Corollary}[section]
\newtheorem{theorem}{Theorem}[section]
\theoremstyle{definition}
\newtheorem{remark}{Remark}
\newif\ifarxiv
\newif\iftwocolumn
\newcommand{\namo}{{\texttt{NAMO}}\xspace}
\newcommand{\plor}{{\texttt{LOR}}\xspace}
\newcommand{\ppor}{{\texttt{POR}}\xspace}
\newcommand{\pltr}{{\texttt{LTR}}\xspace}
\newcommand{\pptr}{{\texttt{PTR}}\xspace}
\begin{document}
\date{}
\title{Rearrangement on Lattices with Pick-n-Swaps: Optimality Structures and Efficient Algorithms}
\author{Jingjin Yu}
\maketitle


\begin{abstract}
We study a class of rearrangement problems under a novel pick-n-swap prehensile manipulation model, in which a robotic manipulator, capable of carrying an item and making item swaps, is tasked to  sort items stored in lattices of variable dimensions in a time-optimal manner. 
We systematically analyze the intrinsic optimality structure, which is fairly rich and intriguing, under different levels of item distinguishability (fully labeled, where each item has a unique label, or partially labeled, where multiple items may be of the same type) and different lattice dimensions.
Focusing on the most practical setting of one and two dimensions, we develop low polynomial time cycle-following-based algorithms that optimally perform rearrangements on 1D lattices under both fully- and partially-labeled settings.
On the other hand, we show that rearrangement on 2D and higher-dimensional lattices become computationally intractable to optimally solve. 
Despite their NP-hardness, we prove that efficient cycle-following-based algorithms remain optimal in the asymptotic sense for 2D fully- and partially-labeled settings, in expectation, using the interesting fact that random permutations induce only a small number of cycles. 
We further improve these algorithms to provide $1.x$-optimality when the number of items is small. 
Simulation studies corroborate the effectiveness of our algorithms.
The implementation of the algorithms from the paper can be found at \href{https://github.com/arc-l/lattice-rearrangement/}{\nolinkurl{github.com/arc-l/lattice-rearrangement}}.
\end{abstract}


\section{Introduction}\label{sec:intro}
Effective object manipulation \cite{mason2018toward}, a difficult task and motion planning challenge for machines to master, is essential in fulfilling the true potential of autonomous robots in factories and warehouses, and at home. 
In tackling the challenge, in the past few decades, while some research has emphasized integrated solutions with promising results\cite{kaelbling2011hierarchical,levine2016end,mahler2017dex,zeng2022robotic}, significant efforts have also been devoted to examining key components including rearrangement planning \cite{ben1998practical,stilman2005navigation,treleaven2013asymptotically,havur2014geometric,haustein2015kinodynamic,krontiris2015dealing,king2016rearrangement,shome2021fast,han2018complexity,huang2019large,lee2019efficient,pan2020decision} and manipulation \cite{goldberg1993orienting,lynch1999dynamic,dogar2011framework,boularias2015learning,chavan2015prehensile}, among others, as a thorough understanding of these components is indispensable toward the end goal of enabling truly intelligent object manipulation. 
\begin{figure}[ht]
\begin{center}
\vspace{1mm}
\begin{overpic}[width={\iftwocolumn 0.99\columnwidth \else 5in \fi},tics=5]
{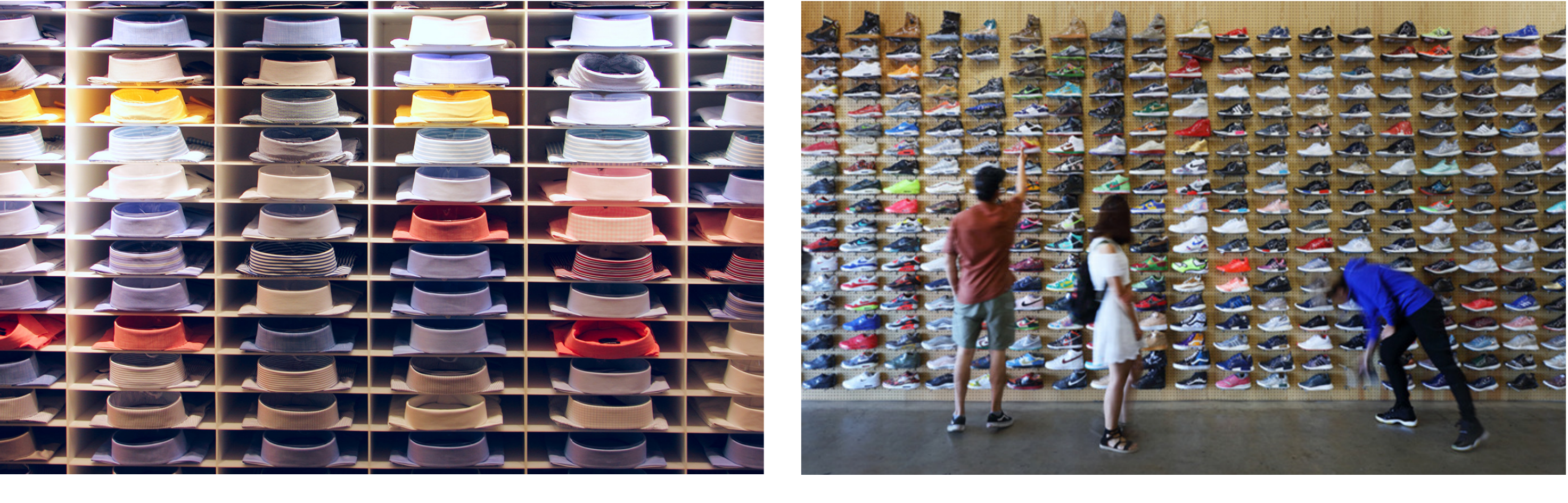}
\end{overpic}
\begin{overpic}[width={\iftwocolumn 1\columnwidth \else 5in \fi},tics=5]
{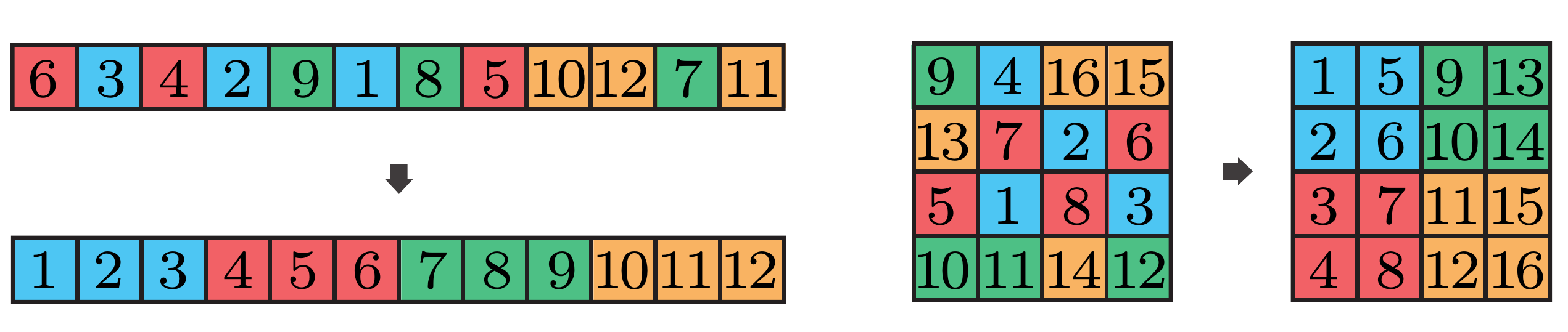}
\end{overpic}
\end{center}
\caption{\label{fig:exs} [top row] Real-world examples of items stored in lattice/grids that need rearrangement from time to time. [top left] Shirts. [top right] Shoe display. [bottom row] Examples of the rearrangement problem formulations studied in this work. [bottom left] A set of $12$ items in a row that must be rearranged either according to the labels or according to the types (colors). [bottom right]  A set of $16$ items in a two-dimensional lattice that must be rearranged either according to the labels or according to the types (colors).}
\end{figure}

For the same reason, in this work, we perform a systematic structural and algorithmic study on a class of prehensile rearrangement problems where items are stored in individual cells of a lattice of dimension $d = 1, 2, \ldots$ (see Fig.~\ref{fig:exs}, bottom row, for example start and goal configurations), under a novel \emph{pick-n-swap} model. 
The stored items may be fully labeled or partially labeled. 
In a fully-labeled setting, each item has a unique label and must go to a specific lattice cell. In a partially-labeled setting, multiple items may have the same label or type and are thus interchangeable.
A robotic manipulator, capable of picking up items, carrying them around (a single item at a time), and executing item swaps, is tasked to rearrange items to reach a desired goal configuration in a time-optimal manner. Efficient solutions for rearrangement problems on lattices to restore the items to a desired order find many practical applications, including the rearrangement of products at stores and showrooms (see Fig.~\ref{fig:exs}, top row), the sorting of books on bookshelves, and inventory management in autonomous vertical warehouses, to list a few.
To accomplish the task, the robot must carefully plan a sequence with which the items are picked and subsequently placed, to minimize the number of pick-n-swaps and the associated end-effector travel. 

Our study of the rearrangement problems under the pick-n-swap prehensile manipulation model is motivated by the recent emergence of dual-gripper end-effectors. Such setups have clear potential in rendering rearrangement tasks more efficiently solvable, given the capacity of carrying more than one item at a time. A dual-gripper end-effector provides functionality that falls between a single-arm-single-gripper setup and a dual-arm setup, leaning closer to a dual-arm setup. While not as flexible as a dual-arm setup, a dual-gripper setup has at least two distinct advantages: it is much more affordable and it does not require a sophisticated motion planning routine to coordinate arm movements. Whereas this research limits the workspace to be lattice-like, the structures analyzed and algorithms developed here can be extended to enable dual-gripper end-effectors to be more efficient in a fully continuous workspace as well.

Due to the intricate interaction between minimizing the number of pick-n-swaps and minimizing the end-effector travel, time-optimal rearrangement on lattices demonstrates a rich and complex structure.
In exploring the structure, our study of the lattice rearrangement problem under the pick-n-swap model brings forth the following contributions:
\begin{itemize}
	\item For the fully-labeled setting, we show that a random rearrangement instance induces the forming of $O(\log m)$ \emph{cycles} over $m$ items. Each cycle decides an optimal pick-n-swap sequence within the cycle in a deterministic manner. 
	For the partially-labeled setting, similar but more complex cycle structures are present. 
	The combinatorial cycle structure applies generally to rearrangement problems, which go beyond lattice-based settings. 
	\item Building on the intriguing and easily computable cycle structure, 	for 1D lattices, for the fully-labeled setting, efficient algorithms are developed that decide the optimal pick-n-swap sequence which simultaneously minimizes the number of pick-n-swap operations and the end-effector travel. For the partially-labeled setting, our algorithms compute optimal pick-n-swap sequences that minimize the end-effector	travel after minimizing the number of pick-n-swaps and allow trade-offs between the two component objectives. 
	\item For 2D and high-dimensional lattices, we show that minimizing the end-effector travel becomes NP-hard for both fully-labeled and partially-labeled settings. This demonstrates a dichotomy in computational	complexity between 1D and higher dimensions for optimally solving the lattice rearrangement problem.
	\item For 2D and high-dimensional lattices, despite the hardness, 	we show that, because the number of cycles is small, efficient algorithms can be developed that compute optimal solutions in the asymptotic sense. 
	That is, as the number of items goes to infinity, the solution converges to the optimal one, in expectation. 
	For the 2D setting, we further develop additional principled heuristics such that even when the number of items is small, the solution is near-optimal, i.e., $1.x$-optimal. 
\end{itemize}
This paper is based on the conference publication \cite{Yu21RSS}. In comparison to the conference version, this manuscript provides: (1) a more thorough and clearer presentation, including complete,  revised proofs for all theorems, (2) significantly stronger guarantees on our algorithms for the labeled settings as well as many unlabeled settings, showing that they often ensure global optimality (Remark~\ref{r:global}, Corollary~\ref{c:plor-global}, Remark~\ref{r:por}, Corollary~\ref{c:pltr-global}, and Remark~\ref{r:pptr}), and (3) expanded discussion of alternative problem formulations and their implications, and interesting open problems. 

\textbf{Related work}. Multi-object rearrangement is computationally challenging. As a variation of multi-robot motion planning problems,  rearrangement inherits the PSPACE-hard complexity \cite{hopcroft1984complexity}. When geometric constraints must be 
considered, the relatively simple Navigation Among Movable Obstacle (\namo) problem is shown to be NP-hard \cite{wilfong1991motion}. If consideration of plan quality is further required, as is often the case in practice, optimally resolving dependency \cite{van2009centralized} or planning an optimal object pick-n-place sequence is both NP-hard \cite{han2018complexity}. Rearrangement problems addressed in this paper have a somewhat similar complexity structure. 

Despite the high computational complexity, due to its high utility, multi-object rearrangement has been extensively studied, with much research working with non-prehensile (e.g., pushing) manipulations, sometimes assisted with prehensile (grasping) actions. 
A complete sensing-planning-control framework is proposed in \cite{chang2012interactive} for the singulation of objects in clutter, which uses both perturbations pushes and grasping actions.
In \cite{laskey2016robot}, hierarchical supervised learning from demonstration is applied to the singulation task. 
Based on over-segmented RGB-D images, in \cite{eitel2020learning}, a push proposal network is constructed for push-only singulation. 
Results that bridge singulation and clutter removal include \cite{zeng2018learning,HuaHanBouYu21ICRA}, where learning-based methods are trained to dictate when to push or grasp. 
To deal with the combinatorial explosions inherent in rearrangement, 
a randomized kinodynamic planner is employed in \cite{haustein2015kinodynamic} for rearranging objects on a tabletop, allowing the effective exploration of configurations. 
\cite{king2015nonprehensile} further integrates a physics-based model to enable the use of the entire robotic arm for complex rearrangement manipulations. 
A physics-based approach is also used in \cite{moll2017randomized} for handling grasping in clutter. 
For a similar task, \cite{bejjani2018planning} uses a receding horizon planner with a learned value function that interleaves planning and plan execution. 

\cite{huang2019large} has developed an Iterated Local Search (ILS) method for accomplishing  multiple tabletop rearrangement tasks including singulation, separation, formation, and sorting of many identically shaped cubes. 
In \cite{song2019multi}, Monte Carlo Tree Search is combined with deep learning for separating many objects into coherent clusters 
within a bounded workspace. In contrast to \cite{huang2019large}, non-convex objects are supported. 
More recently, \cite{pan2020decision} proposed a bi-level planner that employs both pushing and overhand grasping that is capable of sorting up to $200$ objects. 

On work that uses mainly prehensile actions, the earliest is perhaps the study of \namo problems \cite{stilman2005navigation,stilman2007manipulation}, which applies backtracking search to effectively deal with monotone and linear \namo instances, among others.
Exploring the dependency graph structure, difficult non-monotone tabletop rearrangement instances are solved using monotone solvers as subroutines \cite{krontiris2015dealing,krontiris2016efficiently}.
\cite{han2018complexity} shows that tabletop rearrangement embeds a Feedback Vertex Set (FVS) problem \cite{karp1972reducibility} and the Traveling Salesperson Problem (TSP) \cite{papadimitriou1977euclidean}, both of which are NP-hard, rendering optimally solving these problems intractable. Nevertheless, integer programming models are provided that can quickly compute high-quality rearrangement solutions for practical-sized problem instances. 
In exploring object dependency structures, studies like \cite{krontiris2015dealing,han2018complexity} put more emphasis on the combinatorial aspects of object rearrangement. 
In \cite{lee2019efficient}, a polynomial-time, complete planner for reasoning rearrangement for object retrieval in a constrained, shelf-like setting is proposed. In a subsequent study \cite{nam2019planning}, the number of objects to be relocated for retrieval is minimized while considering sensor occlusion. 

\textbf{Organization}. The rest of the manuscript is organized as follows. In Sec.~\ref{sec:problem}, we formally define the lattice rearrangement problems studied in this paper. In Sec.~\ref{sec:lor} and Sec.~\ref{sec:por}, we provide structural analysis and describe algorithms for the fully- and partially-labeled settings in 1D, respectively. 2D and higher dimensions are examined in Sec.~\ref{sec:ltrptr}. We then empirically characterize the behavior of our algorithms through simulation studies in 
Sec.~\ref{sec:eval}, and discuss and conclude in Sec.~\ref{sec:conclusion}.

\section{Rearrangement on Lattices: Problem Formulation}\label{sec:problem}
We examine an object rearrangement problem where items are stored in a $d$-dimensional lattice or grid, $d = 1, 2, \ldots$, with dimension $i$ 
having a length or capacity of $m_i$. 
Items are assumed to be stored at full capacity, i.e., an $m_1 \times \ldots \times m_d$ lattice stores $\Pi_{i = 1}^d m_i$ items (see, e.g., Fig.~\ref{fig:exs}). 
The items may be \emph{fully labeled} or \emph{partially labeled}. In a the fully-labeled setting, each item has a unique label from the set $\{1, \ldots, \Pi_{i = 1}^dm_i\}$ and must be relocated to a specific location or coordinate on the lattice. In a partially-labeled setting, there are $k> 1$ types of items where items within each type are considered interchangeable; items of the same type are to be grouped, via rearrangement, into contiguous clusters.

It is assumed that a robotic manipulator is capable of picking up an item, temporarily holding it, moving it to a different location, and swapping the held item with the item at the new location. That is, in a single \emph{pick-n-swap} operation where a robot end-effector is located above a fixed lattice coordinate, the robot may execute one of the following: 
\begin{itemize}
\item pick up an item and hold it with the robot's end-effector (only if no item is already held by the robot), 
\item swap the item held by the robot's end-effector with an item inside the lattice at the given coordinate, or
\item place the item held by the robot's end-effector at the lattice coordinate if no item is already at the coordinate.
\end{itemize}
In this paper, we use \emph{lattice coordinate} and \emph{cell} interchangeably. Such a pick-n-swap model can be readily realized, for example, using two adjacent suction cups, two parallel grippers mounted side-by-side, and so on. The pick-n-swap primitive also models, to a lesser extent, a dual-arm robot or for that matter, how humans perform such rearrangement tasks. 
We have also examined alternative pick-n-place models, which are discussed in Sec.~\ref{sec:conclusion}.

The pick-n-swap model leads to a natural partition of the robot's operations into \emph{pick-n-swap operations} and \emph{end-effector travel operations}. A \emph{rearrangement plan} can then be represented as a sequence of lattice coordinates, $P=\{p_0, p_1, \ldots, p_N\}$, where the robot end-effector starts from the rest position $p_0$ and sequentially executes pick-n-swap operations at $p_1, p_2$, and so on. For quantifying the quality of a rearrangement plan $P$, it is assumed that each pick-n-swap incurs a (time) cost of $c_p$ and the (time) cost of traveling a unit distance (the distance between two adjacent cells) by the end-effector is $c_t$. The total cost of completing a rearrangement plan is then 
\begin{align}\label{eq:cost}
J_T(P) = Nc_p + \sum_{i = 0}^{N} dist(p_i, p_{i+1})c_t,
\end{align}
where $dist(p_i, p_{i+1})$ is the effective distance traveled by the end-effector between $p_i$ and $p_{i+1}$; $p_{N+1} = p_0$. The distance metric may be $L_1$, Euclidean ($L_2$), and so on, depending on the end-effector's motion mechanism. For example, if a human is to arrange shoes for the setup shown in Fig.~\ref{fig:exs} [top right],
then horizontal travel is likely the main source of travel distance cost. 
This study works with Euclidean distances, i.e., $dist(p_i, p_{i+1}) 
= \|p_i - p_{i+1}\|_2$. 

\begin{remark}It is straightforward to observe that plans computed for the pick-n-swap model are reversible; the associated optimality guarantees, if any, will also carry over. This implies that the algorithms developed in this work can also be applied to (near-)optimally randomize the locations of items stored in lattices. 
\end{remark}

Because a pick-n-swap operation requires precise robot arm placement and grasp planning/execution involving contact between the end-effector and objects, similar to \cite{han2018complexity}, it is assumed that the total pick-n-swap cost dominates the total end-effector travel cost. That is, on the right side of Eq.~\eqref{eq:cost}, the first term will be considered first, yielding a sequential optimization problem in which minimizing the number of pick-n-swaps, $N$, takes priority. 

\begin{remark}\label{r:asymptotic}
In this paper, we use ``optimal in the asymptotic sense'' to mean that the optimality ratio will approach $1$ as the number of items goes to infinity, as is commonly used in \emph{asymptotic analysis} in mathematics.
This is not to be taken as the same as \emph{asymptotically optimal algorithms} used in computer science, which generally means that an algorithm computes $O(1)$-optimal solutions in the worst case.  
\end{remark}

\begin{remark}\label{r:global} In general, the sequential optimization of two objectives that are not orthogonal to each other does not yield globally optimal solutions for the sum of the two objectives. In our case, however, the sequential optimization procedure leads to global optimality for the fully-labeled settings and some partially-labeled settings. For all partially-labeled settings, our algorithms can also be adjusted to globally balance between more pick-n-swaps and more end-effector travel based on the ratio $c_p:c_t$. 
\end{remark}

As practical robotic rearrangement operations are generally limited to one and two dimensions, our study also centers on the cases of $d=1, 2$, with some discussions of higher dimensions. In the $d = 1$ case, let $m_1 = m$ be the capacity of the lattice, viewed as a single \emph{row}. It is assumed that $p_0$ is at the leftmost cell. In the labeled setting, the lattice is equivalent to a row with its cells labeled $1, \ldots, m$ from left to right; the rearrangement problem is then to relocate the item with label $i$ to the $i$-th cell. In the partially-labeled setting, there are $k$ types of items, $k < m$, possibly to be arranged into contiguous clusters (see, e.g., Fig.~\ref{fig:exs} [lower left]), though we do not make assumptions about the goal configurations. 

In the $d = 2$ case, we have an $m_1 (\mathrm{row}) \times m_2 (\mathrm{column})$ lattice; $p_0$ is at the top left cell of the lattice. In the labeled setting, it is assumed without loss of generality that lattice cells are labeled following a column-major order: cells in column $i, 1\le i \le m_2$, are labeled $(i - 1)*m_1 + 1, \ldots, im_1$, from top to bottom, respectively. In the goal configuration, the item labeled $j$ must be located at cell $j$. In the partially-labeled setting, besides considering arbitrary goal configurations, two natural goal configuration patterns are analyzed in more detail, with one having the goals of the same type aggregated (e.g., Fig.~\ref{fig:exs} [bottom right]) and the other having each type occupying a single lattice column.

For convenience, we denote the one-dimensional labeled and partially-labeled problems, as stated above, as the \textbf{l}abeled \textbf{o}ne-dimensional \textbf{r}earrangement (\plor) problem and the \textbf{p}artially-labeled \textbf{o}ne-dimensional \textbf{r}earrangement (\ppor) problem, respectively. The \textbf{t}wo-dimensional problems corresponding to \plor and \ppor are named as \pltr and \pptr, respectively. In this study, we analyze the structural properties of \plor/\ppor/\pltr/\pptr as induced by minimizing the objective specified in Eq.~\eqref{eq:cost}. Based on the findings, we design efficient algorithms for computing (near)-optimal plans for these rearrangement problems.

\section{Fully-Labeled Rearrangement in 1D}\label{sec:lor}
\def\cyclesweep{\textsc{SweepCyclesLOR}\xspace}
\def\cstd{\textsc{SweepCyclesLTR}\xspace}
\def\cswtd{\textsc{SwitchCyclesLTR}\xspace}
\def\cycleGS{\textsc{OptPlanLOR}\xspace}
\def\swap{\textsc{Swap}\xspace}

For combinatorial optimization problems involving the reconfiguration of many bodies, the labeled settings are often more challenging (e.g., \cite{solovey2014k,han2018complexity}). In arranging items stored in lattices, in contrast, the labeled case is structurally simpler. This is due to the ``linear'' dependencies among the items to be relocated, as will become clear shortly. This allows the computation of (exact) optimal solutions for \plor. 

Recall that \plor requires rearranging a row of $m$ items. Therefore, \plor can be viewed as going from one random permutation of $m$ labeled items to the canonical order $[m] := 1, \ldots, m$. An \plor instance is therefore fully specified by a permutation $\pi$ of $[m]$, where $\pi_i$, $1 \le i \le m$ is the label of the item that occupies cell $i$ in the initial configuration. We start with a simple \emph{greedy} \emph{cycle-following} algorithm, \cyclesweep, that solves \plor near-optimally, in expectation, assuming that the initial \plor instance is randomly generated. \cyclesweep minimizes the end-effector travel distance to near-optimality after minimizing the number of pick-n-swap operations to the smallest possible.
Then, we describe a more involved algorithm, \cycleGS, that computes a rearrangement plan that also minimizes the end-effector travel. 
Unlike \cyclesweep, \cycleGS guarantees solution optimality for each individual instance, i.e., it does not depend on the \plor instance being random. 
After presenting \cyclesweep and \cycleGS, we further show that although they perform sequential optimization of Eq.~\eqref{eq:cost}, their optimality guarantees are in fact global.

\subsection{Cycle Following with Left to Right Sweeping}
Consider an \plor instance with $9$ items and the initial configuration $\pi = (3,2,4,1,7,6,9,5,8)$. The instance can be solved by starting with the leftmost item that needs rearrangement, in this case $3$, and moving it from cell $1$ to cell $3$, which replaces item $4$ that in turn replaces 
item $1$. This yields a \emph{cycle} $3-4-1$ (see Fig.~\ref{fig:cf-ex}), often written as $(341)$.
After following this cycle, the end-effector returns to cell $1$. The end-effector then works with the next leftmost item that is not at goal, $7$, inducing another cycle $7985$. Altogether, there are two cycles, $(341)$ and $(7985)$, containing $7$ items in total (here, we deviate slightly from how cycles are normally counted in permutations, where a single item in the correct cell would be counted as a cycle containing a single element as well, which we ignore by default).

\begin{figure}[h]
\begin{center}
\begin{overpic}[width={\iftwocolumn 1\columnwidth \else 5in \fi},tics=5]
{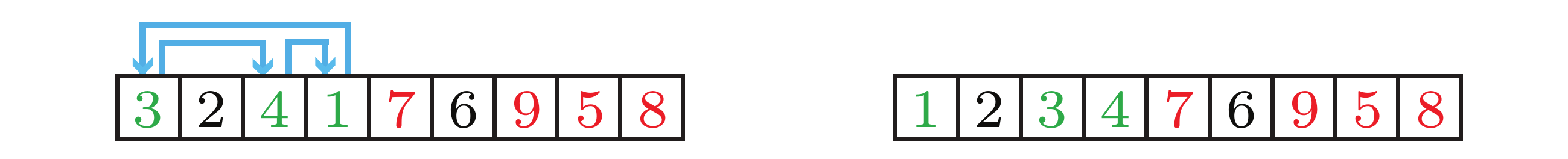}
\end{overpic}
\end{center}
\caption{\label{fig:cf-ex}[left] The initial configuration of an \plor instance with two cycles: $(341)$ and $(7985)$. A plan is shown that rearranges items following the cycle $(341)$. [right] The resulting intermediate configuration.}
\end{figure}

These cycles are uniquely determined by the initial configuration $\pi$. Noticing that each cycle requires one more pick-n-swap than the number of items in the cycle, the instance is solved using a minimum $(3 + 1) + (4 + 1) = 9$ pick-n-swaps. After processing all cycles, the \plor instance is solved and the end-effector returns to its rest position (cell $1$). The straightforward algorithm \cyclesweep, a natural greedy algorithm, is outlined in Alg.~\ref{alg:cycle-sweep}, in which the routine $\swap(\ell, i, j)$ will pick up item $j$ at cell $\ell$ (if it is not $\varepsilon$, denoting a phantom item) and swap it with item $i$ being held (if it is not $\varepsilon$). It is clear that \cyclesweep runs in linear or $O(m)$ time. 

\begin{algorithm}
\begin{small}
\vspace{0.025in}
\Comment{\footnotesize are there more cycles?}
\vspace{0.025in}
\While{there are more items to be rearranged}{
\vspace{0.025in}
\Comment{\footnotesize follow \& resolve the leftmost cycle}
$i \leftarrow $ leftmost $i$ where $\pi_i \ne i$

\swap($i$, $\varepsilon$, $\pi_i$); $g \leftarrow \pi_i$

\While{$g \ne i$}{
\swap($g$, $g$, $\pi_g$); $g \leftarrow \pi_g$
}
\swap($i$, $i$, $\varepsilon$)
}
\vspace{0.025in}
\caption{\cyclesweep($\pi$)} \label{alg:cycle-sweep}
\end{small}
\end{algorithm}

The optimality properties of \cyclesweep are established in Proposition~\ref{p:lor-p} and Proposition~\ref{p:lor-d}. 

\begin{proposition}\label{p:lor-p}
\cyclesweep minimizes the number of required pick-n-swap operations for \plor. 
\end{proposition}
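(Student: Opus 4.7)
The plan is to match \cyclesweep's operation count against a tight lower bound obtained by re-interpreting pick-n-swaps as star transpositions and applying a cycle-based potential argument.

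First, I would reformulate each pick-n-swap as a \emph{star transposition}. All three modes (pick, swap, place) exchange the contents of the end-effector and the addressed cell, provided we treat ``empty'' as a sentinel item $\varepsilon$: pick maps $(\varepsilon, y) \mapsto (y, \varepsilon)$, swap maps $(x, y) \mapsto (y, x)$, and place maps $(x, \varepsilon) \mapsto (\varepsilon, x)$. Modeling the end-effector as an extra position $0$, a pick-n-swap at cell $c$ is exactly the transposition $(0, c)$ on $\{0, 1, \ldots, m\}$. Any valid plan with $N$ pick-n-swaps thus factors a single target permutation $\tau$ into $N$ star transpositions, where $\tau$ sends the initial state (item $\pi_i$ at cell $i$, $\varepsilon$ at position $0$) to the goal (item $i$ at cell $i$, $\varepsilon$ at position $0$). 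One checks that $\tau$ agrees with $\pi$ on cells and fixes $0$, so its cycle decomposition consists of the cycles of $\pi$ together with the extra fixed point $\{0\}$.

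Next, I would lower-bound the required factorization length via the potential
\[
\Phi(\sigma) = \sum_{\substack{C \text{ cycle of } \sigma \\ 0 \in C}} (|C| - 1) + \sum_{\substack{C \text{ cycle of } \sigma \\ 0 \notin C,\; |C| \geq 2}} (|C| + 1).
\]
Multiplying $\sigma$ by $(0, c)$ either merges the two cycles containing $0$ and $c$ (when they lie in different cycles) or splits a single cycle containing both; a case analysis, subdividing further by whether any involved cycle is trivial, shows that $\Phi$ always shifts by exactly $\pm 1$. Since $\Phi(\mathrm{id}) = 0$ and $\Phi(\tau) = \sum_{i=1}^{k} (\ell_i + 1)$, where $\ell_1, \ldots, \ell_k$ are the lengths of the nontrivial cycles of $\pi$, any factorization of $\tau$ into star transpositions uses at least $\sum_i \ell_i + k$ factors.

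Finally, I would match this bound by direct counting on \cyclesweep: each of the $k$ outer-loop iterations resolves one nontrivial cycle using one pick, $\ell_i - 1$ swaps that trace the cycle, and one closing place, totaling $\ell_i + 1$ pick-n-swaps per cycle and $\sum_i \ell_i + k$ overall. The main obstacle I anticipate is the potential-shift case analysis, where subtle corner cases arise when a split yields a trivial cycle, when a merge absorbs the fixed point $\{0\}$, or when $\ell = 2$; each configuration requires explicit verification that $|\Delta \Phi| = 1$. A cleaner alternative is to invoke the standard distance formula for the Cayley graph of $S_{m+1}$ under the star-transposition generating set, which packages the required bound directly.
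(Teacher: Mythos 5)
Your proof is correct, but it takes a genuinely different and more formal route than the paper's. The paper's argument is a two-sentence combinatorial observation: to resolve any cycle, some item of that cycle must be picked up while the end-effector holds no item of that same cycle, so each nontrivial cycle forces one pick-n-swap beyond its length; induction over cycles gives the lower bound $\sum_i (\ell_i + 1)$, which \cyclesweep attains. You instead encode the end-effector as position $0$, identify each pick-n-swap with the star transposition $(0,c)$ on $\{0,1,\ldots,m\}$, and derive the same lower bound from a potential $\Phi$ that every star transposition shifts by exactly $\pm 1$ --- equivalently, from the Cayley-graph distance of $S_{m+1}$ under the star generators. I checked your case analysis (merges where $0$ and/or $c$ lie in trivial cycles; splits producing a trivial or nontrivial $c$-cycle) and $|\Delta\Phi| = 1$ holds in every case; your count of $\ell_i + 1$ operations per cycle for \cyclesweep (one pick, $\ell_i - 1$ swaps, one closing place) also matches the pseudo-code. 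What your approach buys is rigor and reusability: the paper's ``one additional pick is unavoidable'' is an appeal to intuition, whereas your potential argument is airtight and transfers verbatim to \pltr and, with minor adjustment, to the partially-labeled settings where the same per-cycle count is asserted. The one step you should make explicit is the faithfulness of the reduction --- since the system contains exactly one sentinel $\varepsilon$ (initially at position $0$, and necessarily back at position $0$ in the goal), every valid plan is a sequence of star transpositions and every such sequence is a valid plan, so the Cayley-distance lower bound really does apply to all plans.
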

\begin{proof}
To solve a given cycle, it is clear that an item on the cycle must be first picked up without any other items of the same cycle already held by the end-effector (note that the end-effector may hold items from other cycles). Therefore, for each cycle, one additional pick is unavoidable. Induction 
over the cycles of $\pi$ then proves the proposition.~\qed
\end{proof}

\begin{proposition}\label{p:lor-d}
After minimizing the number of pick-n-swaps, \cyclesweep computes optimal solutions for \plor in the asymptotic sense, minimizing end-effector travel in expectation, assuming that $\pi$ is a random permutation. 
\end{proposition}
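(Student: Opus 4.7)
The plan is to decompose the end-effector travel of \cyclesweep into a within-cycle part and a between-cycle overhead part, show that the within-cycle part already lower-bounds the travel of every valid rearrangement plan, and show that the between-cycle overhead is asymptotically negligible in expectation compared to the within-cycle cost.

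First, for a single cycle $(i_1, \ldots, i_k)$ of $\pi$ with $\pi_{i_j} = i_{j+1}$ (indices mod $k$), \cyclesweep visits the cells in cyclic order starting from the leftmost $i_1$, incurring travel $\sum_{j=1}^{k} |i_j - i_{j+1}| = \sum_{j=1}^{k} |i_j - \pi_{i_j}|$. Summing over all cycles gives a total within-cycle travel of $D(\pi) := \sum_{i:\, \pi_i \ne i} |i - \pi_i|$. For the between-cycle overhead, observe that the algorithm starts at $p_0 = 1$, ends each cycle at its own leftmost cell, and processes cycles in order of their leftmost cells $l_1 < l_2 < \cdots < l_c$; adding in the approach from $p_0$ to $l_1$ and the final return from $l_c$ to $p_0$, the total overhead telescopes to $2(l_c - 1) \le 2(m-1) = O(m)$. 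Therefore $T_{SC}(\pi) \le D(\pi) + 2m$.

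Second, I would prove the matching lower bound $T^*(\pi) \ge D(\pi)$ for the optimal travel. The argument is: each non-fixed item $x$ starts at cell $\pi^{-1}(x)$ and must end at cell $x$; its position changes only when the end-effector carries it; and the end-effector holds at most one item at a time. The triangle inequality then forces the total end-effector travel while carrying $x$ to be at least $|x - \pi^{-1}(x)|$, and summing over items gives $T^*(\pi) \ge \sum_x |x - \pi^{-1}(x)| = D(\pi)$. This is the subtlest step, because even under the minimum pick-n-swap budget an alternative plan may interleave cycles or temporarily deposit items at wrong cells, so the bound must hold against arbitrary such strategies; the carry-distance argument does exactly this, since it depends only on the ``one held item at a time'' invariant of the pick-n-swap model.

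Third, for a uniformly random permutation $\pi$ of $[m]$, $\Pr[\pi_i = j] = 1/m$, so $\mathbb{E}[|i - \pi_i|] = \tfrac{1}{m}\sum_{j=1}^m |i - j|$, and a direct summation yields $\mathbb{E}[D(\pi)] = \tfrac{1}{m}\sum_{i,j} |i - j| = \Theta(m^2)$. Combining the two bounds,
\[
\frac{\mathbb{E}[T_{SC}(\pi)]}{\mathbb{E}[T^*(\pi)]} \;\le\; 1 + \frac{O(m)}{\Theta(m^2)} \;=\; 1 + O(1/m) \;\longrightarrow\; 1
\]
as $m \to \infty$, which is the asymptotic optimality in expectation claimed by the proposition.
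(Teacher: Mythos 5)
Your proof is correct and follows essentially the same route as the paper: decompose \cyclesweep's travel into within-cycle distance plus a between-cycle overhead of at most $2m$, and show the expected within-cycle distance is $\Theta(m^2)$ so the overhead is asymptotically negligible. The one place you go further is in making the lower bound $T^*(\pi) \ge \sum_i |i - \pi_i|$ explicit via the one-item-at-a-time carry-distance argument, which the paper only asserts here (and spells out later, in the proof of Theorem~\ref{t:cg-opt}).
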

\begin{proof}
Given the initial configuration $\pi$, rearranging each cycle will cause the end-effector to end at where it starts following \cyclesweep, which is the leftmost location where a cycle starts. The total distance traveled by the end-effector can be factored into (i) the distance traveled to solve each cycle, and (ii) the overhead of traveling after solving a cycle to the next, including the overhead before starting the first cycle and after completing the last cycle. For (i), because each cycle must be rearranged to minimize the number of pick-n-swaps, the distance for solving each cycle is already at the minimum possible. For (ii), the end effector travels from left to right in between solving cycles. This adds no more than $2m$ distance in total. We show that $2m$ is inconsequential in comparison to the distance incurred by (i). To compute distance incurred by (i), given a random $\pi$, for a fixed $i$, the expected distance between item $i$ (located in cell $\pi^{-1}_i$, which is uniformly randomly distributed between $1$ and $m$) and cell $i$ is 

\[
\mathbb E_i = \frac{i -1 + \ldots + 1 + 0 + 1 + \ldots + m - i}{m}.
\]

Tallying over $i$ from $1$ to $m$, the expected total distance due to resolving all cycles is then
\[
\mathbb E_{\pi} = \mathbb E_1 + \ldots + \mathbb E_m =  \dfrac{1}{m}\sum_{i=1}^{m-1} (i^2 + i) \approx \frac{m^2}{3}, 
\]
which dominates $2m$. Therefore, the total end-effector travel distance produced by \cyclesweep is optimal in the asymptotic sense, in expectation.~\qed
\end{proof}

Let $H_m$ denote the $m$-th harmonic number.\footnote{$H_m = \sum_{i=1}^m \dfrac{1}{i} \approx \log m$, where $\log$ refers to natural logarithm.} We can further estimate the expected total cost according to Eq.~\eqref{eq:cost}. 

\begin{proposition}\label{p:lor-exp}
For \plor with random initial configurations, the expected total rearrangement cost is 
\begin{align}\label{eq:plor-cost}
T_{\plor}(m) \approx (m + H_m-2)c_p + \frac{m^2c_t}{3}.
\end{align}
\end{proposition}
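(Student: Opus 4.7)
The plan is to expand $J_T(P) = Nc_p + (\text{travel})\,c_t$ and take expectations term by term over a uniformly random $\pi \in S_m$, using the optimality of \cyclesweep from Propositions~\ref{p:lor-p} and~\ref{p:lor-d} together with two classical facts about random permutations.

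For the pick-n-swap term, I would start from the identity implicit in the proof of Proposition~\ref{p:lor-p}: if $F(\pi)$ denotes the number of fixed points of $\pi$ and $C(\pi)$ the number of cycles of length at least two, then $N = (m - F(\pi)) + C(\pi)$, since each non-trivial cycle contributes one pick-n-swap per item plus exactly one extra pick. By linearity of expectation, $\mathbb{E}[F(\pi)] = \sum_{i=1}^{m} \Pr[\pi_i = i] = 1$. For $\mathbb{E}[C(\pi)]$, I would invoke the classical fact that the expected total number of cycles (of all lengths) of a uniformly random $\pi \in S_m$ equals the harmonic number $H_m$, so that $\mathbb{E}[C(\pi)] = H_m - 1$. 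Combining yields $\mathbb{E}[N] = m + H_m - 2$, giving the first term of the claimed cost.

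For the travel term, most of the work is already carried out inside the proof of Proposition~\ref{p:lor-d}: the expected sum of displacements $\sum_{i=1}^{m} \mathbb{E}|i - \pi^{-1}_i|$ evaluates to $\tfrac{m^2-1}{3}$, and the $2m$ inter-cycle overhead is dominated by this $\Theta(m^2)$ leading term. I would simply quote that estimate, multiply by $c_t$, and add the result to the pick-n-swap contribution above.

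The only substantive ingredient beyond the previous propositions is the $H_m$ identity for the expected number of cycles, which is standard enough to cite rather than reprove. The main obstacle, if any, would be making the ``$\approx$'' in the final formula precise, but since the statement only claims an asymptotic estimate, keeping the presentation informal and letting $H_m$ and $m^2/3$ appear as the leading-order contributions is sufficient.
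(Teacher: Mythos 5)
Your proposal is correct and follows essentially the same route as the paper's proof: both decompose $N$ as (number of non-fixed items) plus (number of non-trivial cycles), use $\mathbb{E}[F(\pi)]=1$ and the classical $H_m$ count of cycles to get $m+H_m-2$, and import the $\approx m^2/3$ travel estimate from the proof of Proposition~\ref{p:lor-d}. Your explicit identity $N=(m-F(\pi))+C(\pi)$ followed by linearity of expectation is a slightly cleaner packaging of the same argument, and your exact value $(m^2-1)/3$ for the displacement sum agrees with the paper's approximation.
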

\begin{proof}
To compute the expected total cost including the cost of pick-n-swaps, we know that the number of cycles (here cycles of size $1$are included) in a random permutation $\pi$ of $[m]$ is $H_m$ \cite{flajolet2009analytic}, the $m$-th harmonic number. Given any $\pi$, the probability of any item $i$ is already at cell $i$ is $\frac{1}{m}$. Therefore, the expected number of cycles of length $1$ is $m\dfrac{1}{m} = 1$, making the expected number of cycles containing at least two items in a random permutation $H_m - 1$. 
The total number of pick-n-swaps is then $m - 1$, the number of items that must be rearranged, plus $H_m - 1$, the extra number of pick-and-swap operations for completing cycles. This yields $m + H_m - 2$.
Adding end-effector travel, the total (time) cost of rearrangement, in expectation, is then given by~Eq.~\eqref{eq:plor-cost}.~\qed
\end{proof}

\begin{remark}
In proving Proposition~\ref{p:lor-exp}, we make the observation that the two terms of the objective function Eq.~\eqref{eq:cost} are simultaneously minimized. This implies that \cyclesweep actually guarantees global optimality in the asymptotic sense for \plor. We will make this more formal in Sec.~\ref{subsec:cycle-switch}, after introducing the optimal algorithm for \plor. 
\end{remark}

\cyclesweep, as a natural greedy algorithm, minimizes the number of pick-n-swaps. It also minimizes the end-effector travel distance in the asymptotic sense. From here, any additional gain in minimizing Eq.~\eqref{eq:cost} for \plor (and later, \pltr) must come from further minimizing the end-effector travel in a non-asymptotic manner and without increasing the number of pick-n-swaps. The non-asymptotic improvement is important in practice because the number of items that are involved is generally not very large.

\subsection{Cycle Sweeping with Cycle Switching}\label{subsec:cycle-switch}
In \cyclesweep, each cycle is followed through one by one, without switching to another cycle before one is complete. If we interleave the completion of cycles, however, end-effector travel can be shortened without adding the number of pick-n-swaps. In the example illustrated in Fig.~\ref{fig:lor-ex}, the plan by \cyclesweep uses $6$ pick-n-swaps and a total end-effector distance of $14$. The alternative plan, which breaks cycles during the rearrangement process, uses also $6$ pick-n-swaps but only a total distance of $10$.

\begin{figure}[h!]
\begin{center}
\begin{overpic}[width={\iftwocolumn 1\columnwidth \else 5in \fi},tics=5]
{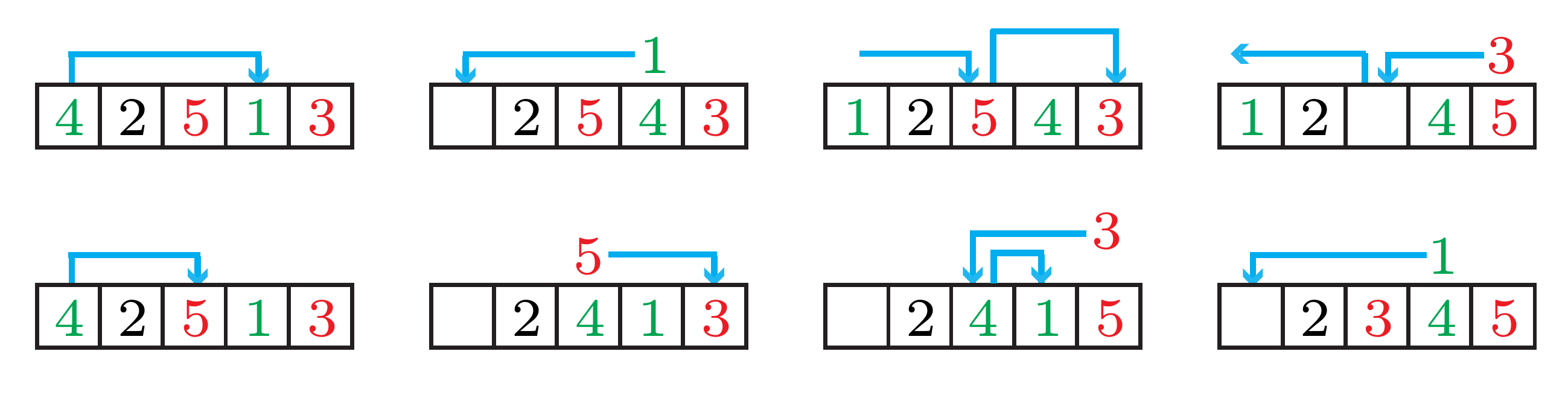}
\end{overpic}
\end{center}
\caption{\label{fig:lor-ex}[top] A rearrangement plan with a total distance of $3 + 3 + 4 + 4 = 14$ as computed by \cyclesweep. [bottom] A rearrangement plan with a total distance of $2 + 2 + 3 + 3 = 10$. Both plans require six pick-n-swap operations; the latter one does not wait for one cycle to finish.}
\end{figure}
From the example, we observe a \emph{switch} from one cycle to another cycle before completing rearranging the first can reduce end-effector travel. The saved distance corresponds to reducing the end-effector travel without holding an item. In the example, the bottom plan avoids traveling from the leftmost location to item $5$'s initial location (and back), saving a distance of $2 + 2 = 4$. 
The observation leads to the \cycleGS algorithm that groups cycles for more effective rearrangement. To describe the algorithm, some definitions are in order. Given a permutation $\pi$, let $C_{\pi}$ be the set of all cycles induced by $\pi$. For a $c \in C_{\pi}$, let $\min(c)$ and $\max(c)$ be the smallest and largest item labels of items in $c$, respectively. With a slight abuse of notation, the definitions $\min$ and $\max$ extend to a set of cycles, i.e., for $C \subset C_{\pi}$, $\min(C) = \min_{c \in C} \min(c)$ and $\max(C) = \max_{c \in C} \max(c)$. 

We group elements of $C_{\pi}$ into equivalence classes as follows. Initially, let $\mathcal C_{\pi} := \{\{c_i\} \mid c_i \in C_{\pi}\}$. Elements of $\mathcal C_{\pi}$ are grouped (via union) if their ranges overlap. That is, for two cycle groups $C_1, C_2 \in \mathcal C_{\pi}$, if their ranges $[\min(C_1), \max(C_1)]$ and $[\min(C_2), \max(C_2)]$ intersect, we update $\mathcal C_{\pi}$ to $(\mathcal C_{\pi} \backslash \{C_1, C_2\}) \cup \{C_1 \cup C_2\}$.

Since there are only a finite number of cycles, the grouping process will stop and yield a set of cycle groups that are pairwise disjoint. Then, a rearrangement plan can be computed as follows. Let the leftmost group of cycles be $C_1 = \{c_1, c_2, \ldots\}$ where $\min(c_1) < \min(c_2) < \ldots $. We start performing cycle following on $c_1$ until the end-effector passes over location $\min(c_2)$ (where $c_2$ starts) for the first time, at which point we pause following $c_1$ and switch to following $c_2$. Similarly, as $c_2$ is being followed, we will switch to following $c_3$ as the end-effector passes over $\min(c_3)$ for the first time, and so on. At some point, the end effector will reach $\max(C_1)$, the rightmost reach of the cycle group $C_1$. If there are additional cycle groups on the right of $C_1$, we pause working with $C_1$ and start working with the next cycle group on the right of $C_1$, and return to $C_1$ after all items to the right of $C_1$ are rearranged (iteratively).

An outline of the \cycleGS algorithm is given in Alg.~\ref{alg:cyclegs}, which in turn calls Alg.~\ref{alg:pcg} and Alg.~\ref{alg:pc}. It is not difficult to observe that \cycleGS runs in $O(m)$ time. We further prove its distance optimality.

\def\processCG{\textsc{ProcessCycleGroup}\xspace}
\def\processcycle{\textsc{ProcessCycle}\xspace}
\def\getcycles{\textsc{GetCycles}\xspace}

\begin{algorithm}[ht]
\begin{small}
\vspace{0.025in}
\Comment{\footnotesize retrieve cycles as singleton sets}
$\mathcal C_{\pi} \leftarrow $ \getcycles($\pi$)

\vspace{0.025in}
\Comment{\footnotesize group cycles with overlapping ranges}
\While{$\exists C_1, C_2 \in \mathcal C_{\pi}$ with overlapping ranges}{
$\mathcal C_{\pi} \leftarrow (\mathcal C_{\pi} \backslash \{C_1, C_2\}) \cup \{C_1 \cup C_2\}$ 
}

\vspace{0.025in}
\Comment{\footnotesize process cycle groups}
$C \leftarrow $ left most cycle group in $\mathcal C_{\pi}$

\processCG($\varepsilon$, $C$, $\pi$, $\mathcal C_{\pi}$)

\caption{\cycleGS($\pi$)} \label{alg:cyclegs}
\end{small}
\end{algorithm}

\begin{algorithm}[ht]
\begin{small}
\vspace{0.025in}
$c \leftarrow $ leftmost cycle in cycle group $C$

\processcycle($p$, $c$, $C$, $\pi$, $\mathcal C_{\pi}$)

\caption{\processCG($p$, $C$, $\pi$, $\mathcal C_{\pi}$)} \label{alg:pcg}
\end{small}
\end{algorithm}

\begin{algorithm}[h!]
\begin{small}
\vspace{0.025in}

$c' \leftarrow $ leftmost cycle in cycle group $C$ after $c$

$C' \leftarrow $ leftmost cycle group to the right of $C$ in $\mathcal C_{\pi}$

\vspace{0.025in}
\Comment{\footnotesize process cycle $c$ from left}
$i \leftarrow $ $\min(c)$; \swap($i$, $p$, $\pi_i$); $g \leftarrow \pi_i$ 

\While{$g \ne i$}{
\vspace{0.025in}
\Comment{\footnotesize switch cycles within the group?}
\While{$c' \ne null \textrm{ and } g > \min(c')$}{
\processcycle($g$, $c'$, $C$, $\pi$, $\mathcal C_{\pi}$)

$c' \leftarrow $ leftmost cycle in cycle group $C$ after $c'$
}
\vspace{0.025in}
\Comment{\footnotesize switch to the next cycle group?}
\If{$g == \max(C) \textrm{ and } C' \ne null$}{
\processCG($g$, $C'$, $\pi$, $\mathcal C_{\pi}$)
}

\vspace{0.025in}
\Comment{\footnotesize following the current cycle}
\swap($g$, $g$, $\pi_g$); $g \leftarrow \pi_g$
}

\swap($i$, $i$, $p$);
\caption{\processcycle($p$, $c$, $C$, $\pi$, $\mathcal C_{\pi}$)} \label{alg:pc}
\end{small}
\end{algorithm}

\begin{theorem}\label{t:cg-opt}
For \plor, \cycleGS computes a rearrangement plan with minimum end-effector travel, among all plans that minimize the number of pick-n-swaps.  
\end{theorem}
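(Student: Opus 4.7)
My plan is to verify the two components of the cost in Eq.~\eqref{eq:cost} separately. The pick-n-swap count is handled exactly as in Proposition~\ref{p:lor-p}: for each cycle $c$, \cycleGS performs one initial pick, $|c|-1$ swaps, and one terminating place, so it meets the per-cycle lower bound on operations, hence the global minimum. The substantive work is to show that, among all min-pick-n-swap plans, \cycleGS also minimizes the total end-effector travel.

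For travel optimality, I would adopt an edge-crossing accounting. Orient the 1D lattice with unit edges $e_i = (i, i+1)$ and let $\chi(e_i)$ be the number of times a plan crosses $e_i$; total travel equals $\sum_i \chi(e_i)$. For each edge I would establish a lower bound $\sigma(e_i)$ that \cycleGS attains with equality. The bound combines two sources: an item-flow contribution equal to the number of items whose initial and goal cells lie on opposite sides of $e_i$, and an empty-handed contribution forced by entering and exiting each cycle-group range from the rest position $p_0$. Under the min-pick-n-swap constraint, once an item is picked up to start a cycle the end-effector remains loaded until the matching terminating place, so every intermediate crossing inside a cycle-group range is charged to a distinct item transport; only the entry and exit of each group's range are empty-handed. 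The disjointness of cycle-group ranges, established by the grouping step of Alg.~\ref{alg:cyclegs}, makes this empty-handed accounting clean and independent across groups.

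The main obstacle is formally showing that \cycleGS attains the item-flow lower bound on every edge simultaneously, i.e., that each of its crossings carries a distinct required item across the corresponding edge in the required direction. I would prove this by induction on the nesting depth of the \processcycle recursion: at the outermost level the trajectory follows cycle $c_1$ in its natural left-to-right order, switching to a nested cycle $c'$ only at the first pass over $\min(c')$; the currently held item is temporarily deposited at $\min(c')$ by the opening \swap of the recursive call and recovered on return via the closing \swap at the same cell, so the inner recursion introduces no re-crossings of edges already accounted for. If the direct tightness argument becomes unwieldy, a cleaner fallback is an exchange argument that transforms any candidate optimal min-pick-n-swap plan into \cycleGS's trajectory by iteratively moving each nested-cycle switch forward to the first opportunity and showing that each such local reordering does not increase the total travel.
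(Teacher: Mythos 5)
Your argument is correct and rests on the same decomposition as the paper's proof --- travel within a cycle group versus travel between groups --- but you organize the lower bound by edges rather than by items. The paper lower-bounds the loaded travel directly by $\sum_i |\pi^{-1}_i - i|$ (each item must be carried at least its displacement) and observes that \processcycle moves each item exactly that distance, possibly in several monotone segments, with the inter-group overhead charged separately as trivially unavoidable; your per-edge crossing bound $\chi(e_i)\ge\sigma(e_i)$ is the cut/flow dual of that statement, and it buys you a cleaner additivity argument: the item flow is zero across every edge lying in a gap between group ranges (group ranges contain every item's start--goal interval), so the two forced crossings there never double-count against the transport bound. You also explicitly isolate, and propose an induction for, the tightness step that the paper only illustrates by example --- that under nested switching each item still travels exactly its displacement because the held item is parked at $\min(c')$, which (since the switch fires at the first opportunity) lies strictly between the current position and the item's goal, and is recovered at that same cell. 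One correction to your bookkeeping: the inter-group crossings made by \cycleGS are not empty-handed; the end-effector carries item $\max(C)$ across the gap, parks it at the left end of the next group's range, and retrieves it on the way back. This does not change your counts --- each gap edge is still crossed exactly twice and those crossings are charged to the positional bound, not to item flow --- but you should either drop the loaded/unloaded labeling or explicitly attribute the excess travel of item $\max(C)$ to the gap edges, as the paper does.
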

\begin{proof}
We prove the theorem by showing that: (a) end-effector travel is minimal within each cycle group, and (b) end-effector travel between two adjacent cycle groups is minimized (this is trivially true). 

To prove (a), for an item with label $i$, let $\pi^{-1}_i$ be its initial location (this is natural since we then have $\pi_{\pi^{-1}_i} = i$). Notice that the minimum distance the end-effector must travel while carrying item $i$ is $|\pi^{-1}_i - i|$. In \processcycle, within a cycle group, an item $i$ is moved exactly a total distance of $|\pi^{-1}_i - i|$, even though it may be done in multiple steps. For example, in the top figure of Fig.~\ref{fig:lor-ex}, item $4$ is moved a distance of $3$ in a single move. In the bottom figure, item $4$ is first moved a distance of $2$ and later followed by a move of distance $1$.
We note that an item $i$ may be moved more than $|\pi^{-1}_i - i|$ by \cycleGS if it is carried from one cycle group to another, but such travel is attributed to travel between adjacent cycle groups, which is unavoidable.~\qed 
\end{proof}

It is clear that Proposition~\ref{p:lor-exp} and Theorem~\ref{t:cg-opt} optimize Eq.~\eqref{eq:cost} globally because each of the two terms of Eq.~\eqref{eq:cost} is optimized globally. 

\begin{corollary}\label{c:plor-global}
For \plor, in minimizing Eq.~\eqref{eq:cost}, \cyclesweep computes globally optimal solutions in the asymptotic sense in expectation and \cycleGS computes globally optimal solutions.
\end{corollary}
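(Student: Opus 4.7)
The plan is to show that Eq.~\eqref{eq:cost} decomposes into two terms, each of which admits a lower bound that holds over \emph{every} valid rearrangement plan, and that both lower bounds are attained simultaneously --- exactly by \cycleGS, and asymptotically in expectation by \cyclesweep. Once this is established, ``sequential'' optimization is in fact joint optimization, and the global-optimality claims follow immediately.

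First, I would fix an arbitrary valid plan $P$ with $N$ pick-n-swaps and travel distance $D$. Proposition~\ref{p:lor-p} gives the pick-n-swap lower bound $N \ge N^{*}$, where $N^{*}$ is the value attained by \cyclesweep and \cycleGS. For the travel, I would lift the inner argument of the proof of Theorem~\ref{t:cg-opt} into a plan-agnostic form: each misplaced item $i$ must at some point be physically carried from $\pi^{-1}_i$ to $i$, so the aggregate carrying distance is at least $\sum_{i \ne \pi^{-1}_i} |\pi^{-1}_i - i|$, and the empty travel required to visit the leftmost and rightmost cells of each cycle group bounds the unavoidable travel between disjoint groups. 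Summing these contributions gives a travel lower bound $D^{*}$ that depends only on $\pi$ and not on $N$.

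Then I would conclude on an algorithm-by-algorithm basis. For \cycleGS, Proposition~\ref{p:lor-p} and Theorem~\ref{t:cg-opt} yield $N = N^{*}$ and $D = D^{*}$ simultaneously, so $J_T(\cycleGS) = N^{*} c_p + D^{*} c_t \le N c_p + D c_t = J_T(P)$ for every $P$, which is global optimality. For \cyclesweep on a random $\pi$, $N = N^{*}$ holds exactly by Proposition~\ref{p:lor-p}, while Proposition~\ref{p:lor-d} shows $\mathbb{E}[D] = D^{*} + O(m)$ against $D^{*} = \Theta(m^{2})$; hence $\mathbb{E}[J_T(\cyclesweep)]/J_T^{*} \to 1$ as $m \to \infty$, which is the claimed asymptotic-in-expectation global optimality.

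The only delicate point --- and the real content of the corollary beyond the already-proved results --- is verifying that $D^{*}$ is a lower bound on \emph{all} plans, including those that choose to spend $N > N^{*}$ pick-n-swaps. The per-item carrying bound $|\pi^{-1}_i - i|$ depends only on the item's source and sink, so it is oblivious to the pick-n-swap schedule, and any extra pick-n-swaps only add nonnegative empty travel and therefore cannot reduce $D$. With this small extension of the Theorem~\ref{t:cg-opt} argument in hand, the corollary follows immediately.
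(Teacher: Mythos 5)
Your proposal is correct and follows essentially the same route as the paper, which derives the corollary directly from the observation that the two terms of Eq.~\eqref{eq:cost} are each minimized over \emph{all} plans (Proposition~\ref{p:lor-p} and Theorem~\ref{t:cg-opt} for \cycleGS, Propositions~\ref{p:lor-p} and~\ref{p:lor-d} for \cyclesweep), so that sequential optimization coincides with joint optimization. The one point you develop explicitly --- that the travel lower bound $\sum_i \lvert \pi^{-1}_i - i\rvert$ plus the inter-group overhead is plan-agnostic and cannot be beaten by spending extra pick-n-swaps --- is exactly the step the paper leaves implicit in declaring the claim ``clear,'' and your justification of it is sound.
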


\begin{remark} Alg.~\ref{alg:cyclegs} has a pre-processing stage where the cycles are ordered. We note that this stage can be interleaved with the actual processing stage (Alg.~\ref{alg:pcg}). We have opted for the standalone pre-processing stage to make the algorithm hopefully more clear and the running time analysis more straightforward.
\end{remark}

\section{Partially-Labeled Rearrangement in 1D}\label{sec:por}
\def\mnmc{\textsc{OptPlanPOR}\xspace}
\def\cycleform{\textsc{FormCycles}\xspace}
\def\cyclemerge{\textsc{MergeCycles}\xspace}
\def\cyclemergeMST{\textsc{MergeCyclesMST}\xspace}
\def\cycleGSPOR{\textsc{GroupSweepCyclesPOR}\xspace}
\def\cycledist{\textsc{CycleDistance}\xspace}
\def\mst{\textsc{MST}\xspace}
\def\planptr{\textsc{PlanPTR}\xspace}

\def\cfptr{\textsc{FormCyclesPTR}\xspace}
\def\mcptr{\textsc{MergeCyclesPTR}\xspace}
\def\scptr{\textsc{SweepCyclesPTR}\xspace}

In the (fully) labeled setting, each item requiring rearranging has a single possible destination, limiting the combinatorial explosion of feasible rearrangement plans. This is no longer the case in the partially-labeled setting where each item of a given type can have multiple goal arrangements (e.g., in Fig.~\ref{fig:exs} [left], item $6$ may go to locations $4, 5$, or $6$). In other words, a partially-labeled problem can be viewed as many labeled problems mixed together, demanding additional computational efforts for selecting a best labeling to solve the problem. Nevertheless, we show that the one-dimensional partially-labeled problem \ppor can still be optimally solved despite the significant added complexity.

We describe an optimal algorithm for \ppor, \mnmc, that applies to arbitrary goal configurations. Because the algorithm is somewhat involved, for readability, we describe the algorithm over a natural but restricted class of \ppor instances where each type of item form a contiguous section in the goal configuration (see, e.g., Fig.~\ref{fig:por-phases}). 
In such instances, for a given type $1 \le t \le k$, let $n_i$ be the number of items of type $t$, $\sum_{t=1}^k n_t = m$. In the goal configuration, items of type $t$ fill locations between $\ell_t = \sum_{i=1}^{t-1} n_i + 1$ and $r_t = \sum_{i=1}^t n_i$, inclusive. Define $range(t) := [\ell_t, r_t]$. 
An instance of this restricted \ppor problem is then fully specified by the initial configuration of the items as a sequence of types, i.e., $(t_1, \ldots, t_m)$, where $1 \le t_i \le k$. 

\subsection{Algorithm Description} 
In the first phase, simple matchings are made between initial and goal locations. Starting from the left side, for the item at location $i$ with type $t_i$, $i \notin range(t_i)$, we select its goal to be the leftmost available one. Fig.~\ref{fig:por-phases} [top left] shows an example. The matchings induce a set of cycles (Fig.~\ref{fig:por-phases}, [top right], with the addition of the vertical edges corresponding to pick-n-swap operations), which are distance optimal for end-effector 
travel but do not minimize the number of pick-n-swaps. 
As is the case with \plor, each cycle requires one more pick-n-swap plus the number of items in the cycle. 
We call the subroutine \cycleform and note that additional end-effector travel between these cycles is needed for obtaining a full rearrangement plan. 

\begin{figure}[h!]
\begin{center}
\begin{overpic}[width={\iftwocolumn 1\columnwidth \else 5in \fi},tics=5]
{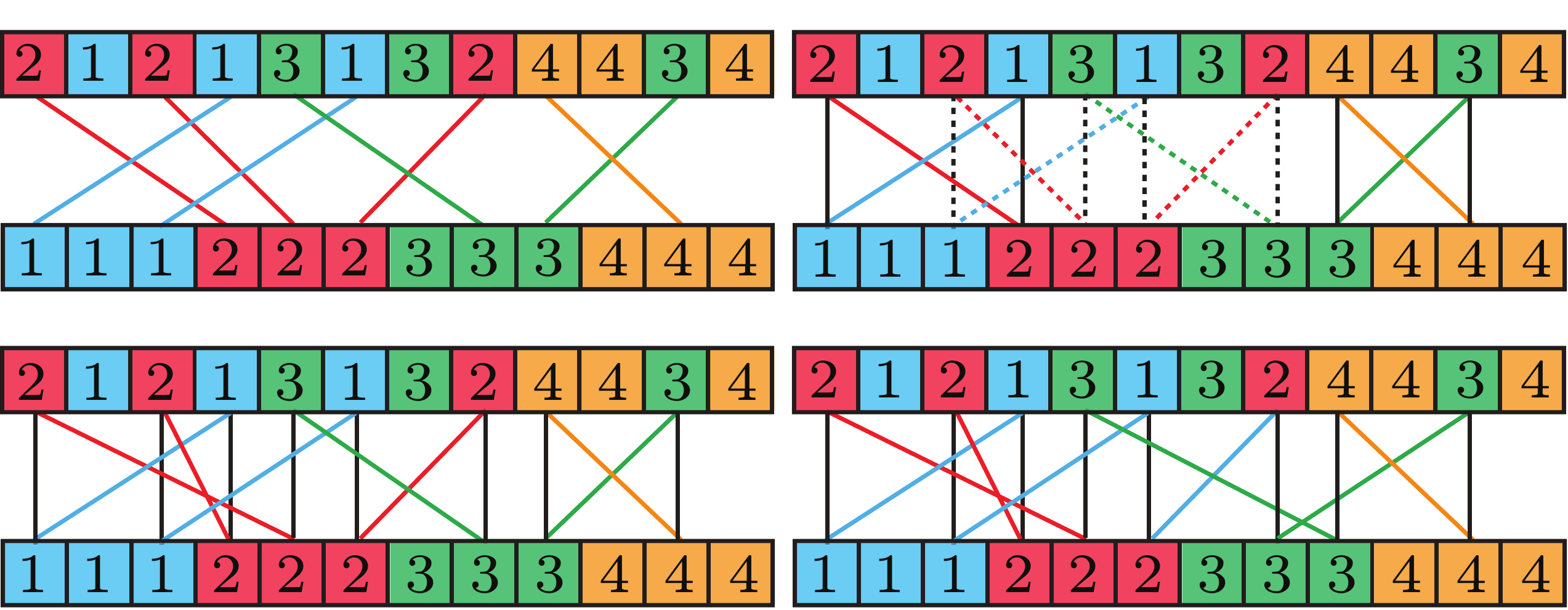}
\end{overpic}
\end{center}
\caption{\label{fig:por-phases} Illustration of the first three phases of the \mnmc algorithm. [top left] The edges show a simple matching of items with proper goals. [top right] The matchings induce three cycles, two of which are marked with solid lines and one with dashed lines. [bottom left] Swapping the two type $2$ edges on the left merges two cycles without adding end-effector travel. [bottom right] Swapping the two type $3$ edges merges two cycles but incurs additional end-effector travel costs.}
\end{figure}
In the second phase, the initial set of cycles are merged in a pairwise manner when two cycles have edges going to the same item type in the same direction (either both left or both right). Formally, two cycles can be merged in this phase if they contain two items $t_i$ and $t_j$, respectively, and $t_i, t_j$ satisfy $t_i = t_j$ and $i, j$ are either both on the left of $range(t_i)$ or both on the right of $range(t_i)$. For example, the left two cycles in Fig.~\ref{fig:por-phases} [top right] can be merged by swapping the goals of the left two type $2$ edges (first and third from the top row), yielding Fig.~\ref{fig:por-phases} [bottom left]. 
The process reduces the number of pick-n-swap operations but does not incur additional end-effector travel, because each merge keeps the total distance unchanged. We call this subroutine \cyclemerge.

In the third phase, cycles are further merged in a pairwise manner if two cycles have edges going to the same type but in different directions. Formally, two cycles can be merged in this phase if they contain two items $t_i$ and $t_j$, respectively, $t_i = t_j$, and $i, j$ are on two different sides of $range(t_i)$. 
For example, in Fig.~\ref{fig:por-phases} [bottom left], the two cycles both have edges going into the third type, but in different directions. Merging these two cycles, as shown in Fig.~\ref{fig:por-phases} [bottom right], will reduce the number of pick-n-swaps by one but will incur additional end-effector travel. We note that to ensure total distance optimality, the merge here needs to be done by computing a minimum spanning tree (MST) based on the added distances when two cycles are merged. We call the associated subroutine \cyclemergeMST. 

After the third phase completes, we obtain a set of cycles that minimizes the number of pick-n-swaps. These cycles are now much like the cycles in the labeled case and are swept through and switched similarly. We call this last subroutine \cycleGSPOR, which connects all cycles and composes the full rearrangement plan.  

\subsection{Algorithm Outline and Optimality Properties} 
The \mnmc algorithm and the \cyclemergeMST subroutine are outlined in Alg.~\ref{alg:mnmc} and Alg.~\ref{alg:cmmst}. 
The other subroutines, \cycleform, \cyclemerge, and \cycleGSPOR are relatively straightforward to implement based on the description; we omit the pseudo-code.  

\begin{algorithm}[h!]
\begin{small}
\vspace{0.025in}
\Comment{\footnotesize phase $1$: form cycles, $C$ is a list of cycles}
$C \leftarrow $ \cycleform($t_1, \ldots, t_m$)

\vspace{0.025in}
\Comment{\footnotesize phase $2$: merge cycles, w/o added distance }
$C' \leftarrow $ \cyclemerge($C$)

\vspace{0.025in}
\Comment{\footnotesize phase $3$: merge cycles, w/ added distance}
$C'' \leftarrow $ \cyclemergeMST($C'$)

\vspace{0.025in}
\Comment{\footnotesize phase $4$: group, sweep, and switch cycles}
\cycleGSPOR($C''$)

\caption{\mnmc($t_1, \ldots, t_m$)} \label{alg:mnmc}
\end{small}
\end{algorithm}

\vspace{1mm}
\begin{algorithm}[h!]
\begin{small}
\vspace{0.025in}
\Comment{\footnotesize initialize a cycle merge distance graph}
$V_C \leftarrow C$; $E_C \leftarrow \{\}$; $W \leftarrow \{\}$; $G_C \leftarrow (V_C, E_C, W)$; 

\vspace{0.025in}
\Comment{\footnotesize compute ``merge distance'' between cycles}
\For{all $c_i, c_j \in V_C$}{
$E_C \leftarrow E_C \cup \{e_{ij} = (c_i, c_j)\}$

$w_{ij} \leftarrow $ \cycledist($c_i, c_j$); $W \leftarrow W \cup \{w_{ij}\}$
}

\vspace{0.025in}
\Comment{\footnotesize compute a minimum spanning forest over $G_C$}

$F \leftarrow $ \mst($G_C$)

\vspace{0.025in}
\Comment{\footnotesize merge cycles for each tree $T$ in $F$ }

$C' \leftarrow \{\}$

\For{each tree $T$ in $F$}{
\While{$T$ has an edge $e_{ij} = (c_i, c_j)$}{
$c_i \leftarrow $ merge $c_i$ and $c_j$

collapse edge $e_{ij}$ in $T$
}

Add the single cycle $c$ in $T$ to $C'$
}

\Return $C'$
\caption{\cyclemergeMST($C$)} \label{alg:cmmst}
\end{small}
\end{algorithm}

In \cyclemergeMST, a graph $G_C$ is constructed that captures the distances between cycles that can be merged to reduce the number of pick-n-swaps. The function \cycledist computes the closest distance between two cycles for merging in the obvious way. This distance in Fig.~\ref{fig:por-phases} [bottom right] is $1$ (we note that the actual swap will incur a cost doubling this distance). After $G_C$ is constructed, which can have multiple connected components, a minimum spanning tree algorithm is executed, e.g., Prim's algorithm (\cite{prim1957shortest}), yielding a spanning forest $F$ of $G_C$. Each tree $T$ in $F$ will result in a single merged cycle; different merged cycles from different trees cannot be merged further to reduce the number of pick-n-swaps.  

We proceed to establish key properties of \mnmc and the subroutines. From here on, arbitrary goal configurations are assumed unless stated otherwise. 

\begin{proposition}\label{p:cyclemerge}
For \ppor with $k$ types of items, \cyclemerge creates cycles that are distance optimal. When goals are aggregated by item types, there are at most $k - 1$ cycles after completing the \cyclemerge subroutine. 
\end{proposition}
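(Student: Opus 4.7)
The plan is to prove the two claims separately, each via a short structural argument about the edges produced by phases~1 and~2. For distance optimality, I would first observe that phase~1 (\cycleform) already produces a distance-optimal matching: for each fixed type $t$, the items of type $t$ currently outside $range(t)$ are matched by the left-to-right greedy rule to the still-empty cells inside $range(t)$ in sorted order, which is the unique non-crossing matching between two sorted subsets of the real line and hence attains the minimum total displacement. Summing over types, which are matched independently, gives overall optimality of the phase-1 assignment. Phase~2 (\cyclemerge) only reassigns the heads of two edges that share the same type $t$ and the same direction; if both are ``from the left'' with $i_1,i_2<\ell_t$ and $g_1,g_2\in range(t)$, a direct calculation shows $(g_1-i_1)+(g_2-i_2)=(g_2-i_1)+(g_1-i_2)$, so the swap is distance-neutral, and the symmetric computation handles the ``from the right'' case. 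Hence the total displacement after phase~2 equals that after phase~1 and is therefore optimal.

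For the cycle-count claim, I would attach to each cycle $C$ a signature set $S(C)$ consisting of the (type, direction) pairs appearing on the edges of $C$, where direction is either $L$ (``from the left'' of the corresponding type's range) or $R$ (``from the right''). By the construction of phase~2, any two cycles that share a pair would have been merged, so if $C_1,\ldots,C_p$ denote the final cycles then the signatures $S(C_1),\ldots,S(C_p)$ are pairwise disjoint. The crux is the lower bound $|S(C)|\ge 2$. Suppose for contradiction that every edge of $C$ carried the same pair $(t,L)$; then every tail of an edge of $C$ would lie strictly to the left of $range(t)$ while every head would lie inside $range(t)$. Since $C$ is a permutation cycle, its multiset of tails must equal its multiset of heads, forcing these two disjoint sets of positions to coincide, a contradiction. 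The same argument applies to $(t,R)$.

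Finally, I would upper-bound the universe of possible (type, direction) pairs under the aggregated-goal assumption. Because $range(1)$ is flush with the left wall of the row and $range(k)$ is flush with the right wall, no edge can enter $range(1)$ from the left nor $range(k)$ from the right, leaving at most $2(k-2)+2=2k-2$ pairs in all. Combining the three ingredients yields
\begin{equation*}
2p \;\le\; \sum_{i=1}^{p}|S(C_i)| \;\le\; 2k-2,
\end{equation*}
so $p\le k-1$, as required. The part I expect to require the most care is the $|S(C)|\ge 2$ lower bound in the presence of cycles that pass through a single type several times; the head-versus-tail containment argument is intended to handle this uniformly without a case analysis on cycle length.
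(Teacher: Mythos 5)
Your proposal is correct and follows essentially the same route as the paper's proof: phase-1 optimality via a one-dimensional lower bound (you use the sorted/non-crossing matching fact where the paper counts minimum cell crossings, which is the same bound in different clothing), distance-neutrality of phase-2 head swaps by the same telescoping identity, and the cycle count by the same incidence count — at most two (type, direction) slots per type, only one for the extreme types, and at least two per cycle, giving $2p \le 2k-2$. Your signature-set formalization and the head-versus-tail argument for $|S(C)|\ge 2$ are just a cleaner writeup of the paper's ``each cycle requires at least two types'' step.
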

\begin{proof}
\cycleform creates cycles that are distance optimal by construction. 
This is clear by looking at the minimum number of times the end-effector must pass over or stop at a given cell of the lattice in order to move items of a given type to the goals. 
For each initial and goal pair, and type, this number is fully determined and realized by \cycleform. For example, for sorting the first type in Fig.~\ref{fig:por-phases}, the end-effector must pass cell $3$ at least once and must also stop at the cell at least once, because all type $1$ items should be to the left of cell $4$ in the goal configuration and there are two type $1$ items in the initial configuration to the right of cell $3$. The cycles created by \cycleform realize this minimum end-effector travel. 
Then, because \cyclemerge does not add additional distance, the total distance remains at the minimum. 

For the rest of the proposition, in the case where the goal configuration has the types aggregated, each type can participate in at most two cycles after \cyclemerge is performed. For example, in Fig.~\ref{fig:por-phases} [bottom left], two cycles exist that contain items of type $3$. There is a single cycle for types $1, 2$, and $4$.
Moreover, the leftmost and rightmost types in the goal configuration can each only participate in a single cycle. 
This is because all items of type $t$ in the initial configuration to the left (or  right) of $range(t)$ will be in a single cycle after \cyclemerge is performed, by construction.
Since each cycle requires at least two types of items, for $k$ types, there can be at most $k-1$ cycles after \cyclemerge. ~\qed 
\end{proof}

\begin{lemma}\label{l:cmmst}
For \ppor, \cyclemergeMST reduces the number of pick-n-swaps to the minimum while incurring the minimum amount of additional end-effector travel. 
\end{lemma}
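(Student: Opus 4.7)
The plan is to decouple the claim into two parts that match the two deliverables of \cyclemergeMST: (a) the number of pick-n-swaps is minimized, and (b) among plans achieving that minimum, the additional end-effector travel is minimized. Since Proposition~\ref{p:cyclemerge} established that \cyclemerge already produces distance-optimal cycles, any further pick-n-swap reduction must come from merging two cycles that share an item type on \emph{opposite} sides of its $range$. I would first catalogue exactly when such a merge is possible, then argue each merge lowers the pick-n-swap count by precisely one while adding exactly $w_{ij} = \cycledist(c_i,c_j)$ to the travel, and finally reduce the optimization problem to a minimum spanning forest on $G_C$.

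First, I would characterize the merge structure combinatorially. A merge of $c_i$ and $c_j$ is effected by picking an item of type $t$ in $c_i$ and an item of the same type $t$ in $c_j$, lying on opposite sides of $range(t)$, and swapping their goal assignments; this splices the two cycles into one. I would verify (i) no ``same-direction'' merge is available after \cyclemerge (those were exhausted in phase 2), (ii) every possible merge is an edge of $G_C$, and (iii) the minimum extra end-effector travel needed to realize any particular merge equals $\cycledist(c_i,c_j)$, because the only uncanceled movement introduced is the detour across $range(t)$ between the two chosen items. A simple accounting shows the merged cycle requires one fewer ``cycle-opening'' pick-n-swap than the sum of the two original ones, so each merge reduces $N$ by exactly $1$.

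Next, I would show that merges compose additively and commutatively within a connected component of $G_C$. The key lemma is that after merging $c_i$ and $c_j$ via an edge $e_{ij}$, every other edge $e_{jk}$ incident to $c_j$ in $G_C$ remains a valid merge edge for the combined cycle, with the same weight $w_{jk}$. This is true because the swap that realizes $e_{ij}$ alters only the two specific goal assignments it acts on, leaving every other opposite-direction type pair intact, and leaving the distance contribution $\cycledist(\cdot,\cdot)$ of any other pair unchanged. Consequently, following any sequence of $|T|-1$ edges of a tree $T\subseteq G_C$ coalesces its $|T|$ cycles into a single cycle while adding a total of $\sum_{e\in T}w_e$ travel and reducing the pick-n-swap count by $|T|-1$. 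Conversely, cycles in different connected components of $G_C$ share no merge-able type pair and therefore cannot be merged at all.

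These pieces combine into the lemma. Maximizing the number of merges is equivalent to selecting, within each connected component of $G_C$, a spanning subgraph of maximum edge count whose edges can be realized as a valid merge sequence — which, by the composition lemma, is any spanning tree of that component. Hence the minimum achievable pick-n-swap count is $|V_C|$ minus the number of connected components of $G_C$, exactly what a spanning forest attains. Among all such spanning forests, minimizing total added travel $\sum_{e\in F} w_e$ is precisely the minimum spanning forest problem, which \mst solves, so \cyclemergeMST achieves (b) as well. The main obstacle I anticipate is the composition argument of the previous paragraph: one must rule out the possibility that some clever non-tree sequence of swaps attains either more reductions or a lower total cost than any MSF, which is handled by observing that each successful merge consumes at most one unit of ``connectivity'' between the two cycles it joins and that every such connection is represented as an edge of $G_C$ with its weight independent of prior merges.
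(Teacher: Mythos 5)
Your proposal follows essentially the same route as the paper's proof: both characterize the remaining feasible merges as opposite-side, same-type goal swaps, argue that each such merge costs exactly the pairwise cycle distance while reducing the pick-n-swap count by one, and then reduce the optimization to a minimum spanning forest over the cycle-distance graph $G_C$. Your explicit ``composition lemma'' (that merges commute and leave the remaining edge weights of $G_C$ unchanged, so no non-tree sequence can beat the spanning forest) spells out a step the paper only asserts when it says the merging process ``naturally induces a spanning tree,'' but this is added detail rather than a different argument.
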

\begin{proof}
After \cyclemerge, for each pair of the resulting cycles, they cannot be merged to reduce the number of pick-n-swaps without incurring additional end-effector travel. To see that this is the case, after \cyclemerge, for each pair of cycles, say $c_1$ and $c_2$, they can be merged to reduce the number of pick-n-swaps if and only if they contain items of the same type. Suppose that $c_1$ and $c_2$ both involve items of the same type, say $t$ (there could be multiple such types for a pair of cycles), and $c_1$ is to the left of $c_2$. 
Then it must be the case that edges of $c_1$ for restoring type $t$ items and edges of $c_2$ for restoring type $t$ items do not intersect (by the construction of \cyclemerge). The third  type in Fig.~\ref{fig:por-phases} [bottom left] gives an example. 
For a type $t$, denote the set of edges of $c_1$ (resp., $c_2$) for restoring type $t$ items as $E_1^t$ (resp., $E_2^t$). To merge $c_1$ and $c_2$, it requires for exactly one $t$, one edge of $E_1^t$ and one edge of $E_2^t$ to swap their ends so that $E_1^t$ and $E_2^t$ will cross over $range(t)$. This operation will incur additional end-effector travel that cannot be avoided. 

The optimal way to merge two cycles $c_1$ and $c_2$ sharing the same item types is by doing the merge on the type $t$ where $E_1^t$ and $E_2^t$ are closest to each other. Without loss of generality, assume $E_1^t$ is to the left of $E_2^t$. The distance between $E_1^t$ and $E_2^t$ is then simply the distance between the rightmost position reached by $E_1^t$ and the leftmost position reached by $E_2^t$. Computing this over all applicable types then yields the distance between $c_1$ and $c_2$ (this is done in \cycledist in \cyclemergeMST). 

For all cycles that can be merged into a single cycle, the merging process naturally induces a spanning tree of the involved cycles. The optimal merging sequence is then given by a minimum spanning tree as computed in \cyclemergeMST.~\qed 
\end{proof}

\begin{theorem}\label{t:por-opt}
For \ppor, \mnmc computes a rearrangement plan with the minimum end-effector travel after minimizing the total number of pick-n-swaps. 
\end{theorem}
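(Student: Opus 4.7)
The plan is to decompose the claim into two sub-claims: (i) \mnmc achieves the minimum possible number of pick-n-swap operations over all valid labelings (goal assignments) of the \ppor instance, and (ii) among all rearrangement plans that attain this minimum number of pick-n-swaps, \mnmc produces one with the smallest possible end-effector travel. Sub-claim (i) follows immediately from Lemma~\ref{l:cmmst}, which states that \cyclemergeMST reduces the number of pick-n-swaps to the minimum. Sub-claim (ii) is where the real work lies, and the strategy is to propagate distance-optimality through the four phases of Alg.~\ref{alg:mnmc}.

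For sub-claim (ii), I would first invoke Proposition~\ref{p:cyclemerge} to show that after \cycleform and \cyclemerge, the induced labeling is \emph{distance-optimal over all valid labelings} (not merely over the labeling chosen by leftmost matching): for each type $t$, the minimum number of times the end-effector must pass or stop at any cell while restoring type-$t$ items is a combinatorial lower bound determined only by the counts of misplaced items on each side of $range(t)$, and the cycles produced by these two phases saturate this bound on every cell simultaneously. Next, I would use Lemma~\ref{l:cmmst} to observe that phase 3 (\cyclemergeMST) is distance-tight relative to \emph{any} merging strategy that reaches the minimum pick-n-swap count: the MST over the cycle-merge graph provides the tightest possible aggregate additional travel, because any spanning structure in that graph is necessary to collapse the involved cycles into one, and the MST minimizes the sum of edge weights (which equal the unavoidable added travel per merge).

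At this point the output of the first three phases is a fixed set of cycles $C''$ equipped with a specific labeling of goals, together with a certified lower bound on the within-cycle travel. The final step is to treat $C''$ as if it were the cycle decomposition of a fully-labeled instance and apply Theorem~\ref{t:cg-opt}. Concretely, \cycleGSPOR is the analogue of \cycleGS operating on $C''$: it groups cycles by overlapping ranges and interleaves them so that each item is carried exactly its minimum required distance and inter-cycle-group travel is reduced to the unavoidable left-to-right sweep. By the same argument used in the proof of Theorem~\ref{t:cg-opt}, this phase is distance-optimal given $C''$. Combining the bounds from the three earlier phases with the inter-cycle optimality guarantee from phase 4 yields the overall distance optimality claim.

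The main obstacle I anticipate is the bookkeeping in the step that glues together the per-phase optimality arguments into a single global lower bound. In particular, one must be careful that the ``additional travel'' charged by \cyclemergeMST and the ``inter-cycle-group overhead'' charged by \cycleGSPOR are not double-counted and that together they form a valid lower bound against any competitor plan achieving the minimum pick-n-swap count. A clean way to handle this is to argue cell by cell: for every lattice cell $c$, lower bound the number of times the end-effector must pass through or stop at $c$ in any plan with minimum pick-n-swaps, and then verify that the plan produced by \mnmc meets this bound at every cell. Summing these cell-wise bounds reconstructs the total travel cost, closing the proof.
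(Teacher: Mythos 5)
Your proposal is correct and follows essentially the same route as the paper: invoke Lemma~\ref{l:cmmst} (together with Proposition~\ref{p:cyclemerge}) to certify that the first three phases yield the minimum pick-n-swap count with minimum travel to realize it, then argue that \textsc{GroupSweepCyclesPOR} connects the resulting cycles by the analogue of Theorem~\ref{t:cg-opt} while adding only unavoidable inter-cycle travel. Your treatment is somewhat more explicit than the paper's (which compresses the final phase into a single assertion), and your suggested cell-by-cell lower bound is a reasonable way to make that assertion rigorous, but no new idea beyond the paper's is needed.
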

\begin{proof}
By Lemma~\ref{l:cmmst}, after \cyclemergeMST, we obtain a set of cycles corresponding to the least number of pick-n-swaps and the minimum end-effector travel to realize this. What is left is to ``connect'' these cycles together to yield a complete rearrangement plan. This connection process is performed using \cycleGSPOR, which maintains the number of pick-n-swaps and adds only the minimum travel distance for cycles that are spatially disjoint. As a result, the overall \mnmc algorithm ensures distance optimality after minimizing the number of pick-n-swaps.~\qed 
\end{proof}

\begin{remark}\label{r:por}
From the discussions, theorems, and proofs, before the step of running \cyclemergeMST, \mnmc optimizes both terms of Eq.~\eqref{eq:cost} simultaneously. In executing \cyclemergeMST, in merging two cycles $c_1$ and $c_2$, let the distance cost of merging them be $dist(c_1, c_2)$. If $dist(c_1, c_2)c_t \ge c_p$, and $c_1$ and $c_2$ overlap (meaning that $c_1$ and $c_2$ can be swept without additional distance cost but with one extra pick-n-swap), then merging the two cycles will increase the total cost as defined by Eq.~\eqref{eq:cost}. In this case, we can skip the merging of $c_1$ and $c_2$. More generally, \mnmc can be augmented to globally optimize Eq.~\eqref{eq:cost} based on different $c_p:c_t$ ratios. Because the required change is fairly involved and the benefit of doing global optimization is small in comparison to optimizing Eq.~\eqref{eq:cost} sequentially (since \cyclemergeMST will only apply to a very few cycles for random instances), we omit further details. 
\end{remark}

In terms of running time, \cycleform can be performed in linear time using multiple passes over the initial and goal configurations. During the execution of \cycleform, data structures can be built so that cycles are associated with types. With the proper data structures, \cyclemerge can be run in linear time by going through the types one by one, resulting in an $O(m)$ running time. For \cyclemergeMST, computing $G_C$ and the distances between cycles can be done in linear time through amortization analysis. Computing a minimum spanning tree can be done in $O(|E_C| + |V_C|\log|V_C|)$ time \cite{johnson1975priority}. 
Merging cycles can be done at the same time as the minimum spanning tree is built, which does not take additional time. \cycleGSPOR takes $O(m)$ time. The total running time of \mnmc is then $O(m + |E_C| + |V_C|\log|V_C|)$. If the goals are aggregated based on types, there are at most $k-1$ cycles (Proposition~\ref{p:cyclemerge}) entering \cyclemergeMST, resulting in a total running time of $O(m + k \log k)$. In the general case, the running time is $O(m\log m)$.

\section{Rearrangement in 2D and Higher Dimensions}\label{sec:ltrptr}
For higher dimensions, it is straightforward to observe that the cycle-following structure for \plor and \ppor carry over. However, minimizing end-effector travel becomes more challenging, as the problem now contains a Traveling Salesperson Problem (TSP), as will be shown. Nevertheless, strategies can be derived that yield optimality in the asymptotic sense. 

\subsection{Fully-Labeled 2D (\pltr) and Higher Dimensions}
\subsubsection{ Labeled Rearrangement in 2D (\pltr).} An \pltr instance is specified by its lattice dimension $m_1, m_2$, and a permutation $\pi$ of $[m_1m_2]$. Similar to \plor, minimizing the number of pick-n-swaps for \pltr can be achieved via cycle following. This allows proving results for \pltr similar to Propositions~\ref{p:lor-p} and~\ref{p:lor-d}. A natural extension to\cyclesweep can be made to support the 2D setting: for an \pltr instance with an initial configuration $\pi$, we compute all its cycles as $c_1, \ldots, c_k$. The new algorithm, which we call \cstd, again performs cycle following of the cycles and moves to cycle $c_{i+1}$ after completing cycle $c_i$.

Then, we note that the cycle switching procedure for \plor (e.g, the process illustrated in Fig.~\ref{fig:lor-ex}) can be generalized to \pltr. That is, for consecutive items $a_1$ and $b_1$ belonging to cycle $c_1$ and an item $a_2$ belonging to cycle $c_2$, instead of going from $a_1$ to $b_1$, it can potentially save travel distance by going from $a_1$ to $a_2$, finishing $c_2$ (and possibly additional cycles), and then going to $b_1$ to finish $c_1$. The optimal switching schedule can be computed using a minimum spanning tree procedure somewhat similar to \cyclemergeMST.
There is however a key difference: in \cyclemergeMST, the cycles to be merged have symmetric distances but the distance from cycle $c_1$ to $c_2$ and the distance from cycle $c_2$ to $c_1$ are different in merging cycles for \pltr. That is, the graph where the cycles are vertices, over which a minimum spanning tree is to be constructed, is now directed. This means that we need to apply a directed minimum spanning tree algorithm~\cite{chu1965shortest,edmonds1967optimum}.
The end-effector rest position should also be considered in computing the directed minimum spanning tree. 

We call the overall cycle switching procedure for \pltr as \cswtd, which clearly runs in polynomial time. 
We omit the pseudo-code for these two algorithms given their similarity to \plor and \ppor. These algorithms are optimal in the asymptotic sense, in expectation.

\begin{proposition}\label{p:ltr-d}
The number of pick-n-swaps is minimized by \cstd and \cswtd. Furthermore, they compute distance-optimal solutions in the asymptotic sense for \pltr in minimizing end-effector travel in expectation, assuming that $\pi$ is a random permutation.
\end{proposition}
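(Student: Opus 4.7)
The plan is to reuse the structure of Propositions~\ref{p:lor-p} and~\ref{p:lor-d}, now on the two-dimensional lattice. First I would handle pick-n-swap minimization: any algorithm must pick one additional time per non-trivial cycle of $\pi$ (since two items of a common cycle can never be simultaneously held by the end-effector), and both \cstd and \cswtd achieve this bound by construction, resolving each cycle with exactly one extra pick. For the distance lower bound, observe that the carried segments devoted to any item $i$ form a path from $p_{\pi^{-1}_i}$ to $p_i$, so by the triangle inequality the end-effector must travel at least $\|p_{\pi^{-1}_i}-p_i\|_2$ while carrying $i$; summing yields
\[
D_{\text{opt}}(\pi)\;\ge\;\sum_{i=1}^m \|p_{\pi^{-1}_i}-p_i\|_2.
\]

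Next I would upper-bound the cost of \cstd. Because \cstd follows each cycle in a single uninterrupted sweep, every item is transported along a single straight segment from its source to its goal, so the total carried distance exactly matches the per-item lower bound above. The remaining travel is the overhead hops between consecutive cycles together with travel to and from the rest position. Invoking the expected cycle count $H_m\sim \log m$ (from the proof of Proposition~\ref{p:lor-exp}) and the lattice diameter bound $\sqrt{m_1^2+m_2^2}=O(\sqrt{m})$, the expected total overhead is $O(\sqrt{m}\log m)$. On the other hand, for uniformly random $\pi$ the pair $(p_{\pi^{-1}_i}, p_i)$ is distributed as two independent uniform cells of the lattice, so $\mathbb{E}\|p_{\pi^{-1}_i}-p_i\|_2 = \Theta(\sqrt{m})$ by a standard expected-distance computation on rectangles, giving $\mathbb{E}[D_{\text{opt}}(\pi)] = \Theta(m^{3/2})$. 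The resulting ratio of achieved cost to optimum is $1 + O(\log m / m) \to 1$ in expectation.

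For \cswtd, I would argue by domination: the directed minimum spanning tree it computes selects an inter-cycle traversal order that is no worse than the fixed left-to-right order implicitly used by \cstd, so its expected travel is bounded above by that of \cstd and inherits the same asymptotic ratio. The main subtlety I anticipate is cleanly isolating the inter-cycle switching from the intra-cycle carries when bounding overhead — the per-item carried lower bound must be preserved even under switching, which holds precisely because \cswtd reorders overhead phases between cycles rather than fragmenting the straight-line carry of any single item (a fragmentation would in fact strictly increase Euclidean carried distance by triangle inequality and therefore never be chosen by the MST). Once this separation is established, the expectation calculation above applies uniformly to both algorithms, yielding the claimed asymptotic distance optimality in expectation.
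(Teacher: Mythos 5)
Your proposal is correct and follows essentially the same route as the paper: minimize pick-n-swaps via one extra pick per nontrivial cycle, lower-bound the travel by the sum of per-item displacements (expected $\Theta(m\cdot\mathrm{diam})$), and show the $O(\mathrm{diam}\cdot\log m)$ inter-cycle overhead is asymptotically negligible. One small caveat: your explicit $\Theta(\sqrt{m})$ diameter and per-item distance implicitly assume a near-square lattice, whereas the paper takes $m_1\ge m_2$ and compares $\Omega((m_1+m_2)m_1m_2)$ against $O(m_1\log m_1)$ to cover arbitrary rectangles --- and your worry about fragmented carries under \cswtd is moot in any case, since each of the $O(\log m)$ cycle switches can detour an item by at most the lattice diameter, which is absorbed into the same overhead term.
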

\begin{proof}
It is clear that \cstd and \cswtd minimize the number of pick-n-swaps. Without loss of generality, assume $m_1 \ge m_2$. Following similar reasoning as used in the proof of Proposition~\ref{p:lor-d}, a random permutation $\pi$ will require an average distance $\mathbb E_{\pi} = \Omega((m_1+m_2)m_1m_2) = \Omega(m_1^2m_2)$. In expectation, there are $H_{m_1m_2} \approx \log m_1m_2 = O(\log m_1)$ cycles. Traveling through all these cycles once then incurs a distance cost of no more than $O(m_1 \log m_1)$, which is inconsequential as compared to $\Omega(m_1^2m_2)$.  
~\qed
\end{proof}

For $m_1 = m_2 = m$, the expected distance from cycles over a random \pltr instance can be readily computed as $((2+\sqrt{2}+5 \ln(\sqrt{2} + 1))/15) m^2 \approx 0.52 m^2$. We omit the details but note that this is equivalent to computing the average distance between two points in a unit square (see, e.g., \cite{santalo2004integral}), via a double integration, and multiply that distance by $m^2$. 
\cstd can be implemented by making a constant number of linear passes over the $m_1 \times m_2$ lattice, yielding an $O(m_1m_2)$ running time. 
\cswtd requires more work; a naive implementation requires $O(m_1^3m_2^3)$ running time, mainly for checking switching distances between cycles. 
In a sense, the optimality (in the asymptotic sense) provided by \cstd and \cswtd is the best one can do, because optimizing distance for \pltr, unlike for \plor, is NP-hard. We note that the hardness holds regardless of whether the number of pick-n-swaps is minimized. 

\begin{theorem}\label{t:ltr-hard}
Minimizing the total end-effector travel distance for \pltr is NP-hard. 
\end{theorem}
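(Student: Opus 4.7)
The plan is to reduce from the Euclidean Traveling Salesperson Problem on points with integer coordinates (Euclidean TSP), which is NP-hard by Papadimitriou's classical result cited in the related work. The high-level idea is to construct a \pltr instance in which almost all items are already at their goal, except for a scattered set of tiny, spatially isolated ``gadgets,'' each of which forces the end-effector to visit a specified grid point. Once isolation is established, the end-effector travel decomposes into a constant within-gadget cost plus the length of a closed tour through the gadget sites, so an optimal \pltr plan yields an optimal Euclidean tour.

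First I would fix the reduction. Given $n$ points $p_1,\dots,p_n \in \mathbb{Z}^2$ from a Euclidean TSP instance, choose an $m_1 \times m_2$ lattice large enough to contain a scaled, shifted copy of these points along with the end-effector rest position $p_0$ at the top-left cell. At each $p_i$ I would install a single transposition cycle: swap the labels of the two horizontally adjacent cells $p_i$ and $p_i+(1,0)$ (after appropriate scaling so that the pairs are pairwise far apart and do not collide with $p_0$). Every other cell is given its identity label. Scaling by a factor $\Lambda$ that is polynomial in $n$ and in the coordinate magnitudes guarantees the gadgets are pairwise separated by distance far exceeding any intra-gadget distance.

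The crux is to argue that the optimal \pltr cost equals $2n \cdot c_p + c_t \cdot (\mathrm{OPT}_{\mathrm{TSP}} + O(n))$, so that minimizing one minimizes the other up to the easy-to-account offset. The lower bound follows because each of the $n$ disjoint 2-cycles must be entered and exited, so the end-effector must cross the $n$ gadget sites in some closed order that starts and ends at $p_0$; contracting each gadget to a single point gives a closed walk through $\{p_0,p_1,\dots,p_n\}$, whose length is at least $\mathrm{OPT}_{\mathrm{TSP}}$ (with $p_0$ added as an extra node, whose cost can be handled by a standard shift). For the upper bound, any TSP tour gives a feasible plan that visits each gadget in the tour order and resolves the local transposition with two pick-n-swaps, paying only a constant per-gadget overhead. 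Scaling $\Lambda$ large enough ensures the cycle-switching freedom from \cswtd provides no benefit: switching between two gadgets forces a detour of order $\Lambda$, which dominates any $O(1)$ savings from not finishing a 2-cycle. Thus the optimal \pltr cost is realized precisely by tours corresponding to optimal TSP solutions, and a polynomial-time algorithm for \pltr distance optimization would decide Euclidean TSP.

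The main obstacle I anticipate is the careful argument that no unintended interaction between gadgets helps. Specifically, I need to rule out two phenomena: (a) combining several 2-cycles into one longer effective cycle using spare items picked up along the way, and (b) exploiting the held-item mechanic to reduce pick-n-swap count across gadgets. For (a), because the labels outside gadgets are identities, any rearrangement plan must still separately touch each of the $2n$ mislabeled cells, and the orbit structure of $\pi$ is exactly the $n$ disjoint transpositions regardless of how the planner interleaves moves. For (b), Proposition~\ref{p:lor-p} generalizes to 2D and forces at least $n$ extra pick-n-swaps beyond the $n$ cells to restore, giving $2n$ pick-n-swaps as a fixed additive term that does not depend on the tour. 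With these observations, the reduction is polynomial and preserves optimality, establishing NP-hardness.
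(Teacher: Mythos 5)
Your proposal is correct and follows essentially the same route as the paper: a reduction from Euclidean TSP in which each TSP point becomes a pair of adjacent items to be transposed, the lattice is scaled so that intra-gadget travel is negligible compared to inter-gadget travel, and the optimal end-effector tour through the gadgets therefore coincides with the optimal TSP tour. The only cosmetic difference is that you add the rest position $p_0$ as an extra node (better to identify it with one designated TSP point, as the paper does), and you spell out the isolation and lower-bound arguments that the paper leaves as ``minor details omitted.''
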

\begin{proof}
We prove the claim via a reduction from the Euclidean TSP \cite{papadimitriou1977euclidean}. Given a Euclidean TSP instance specified by a set of points embedded in a rectangular region (e.g., Fig.~\ref{fig:ltr-tsp}, left), we superimpose a lattice over the region at some resolution $m_1 \times m_2$. To construct the \pltr instance, we set the initial condition $\pi$ to be the identity permutation, i.e., $\pi_i = i$ for all $ 1 \le i \le m_1m_2$. Then, we update $\pi$ for each of the internal points (i.e., excluding the top left ``starting'' point) in the TSP instance. For a given internal point in the TSP instance, let its coordinates be $(x_1, x_2)$. Without loss of generality, we may assume that  $(x_1, x_2)$ satisfy $1 < x_1 < m_1$ and $1 < x_2 < m_2$. We update $\pi$ such that $\pi_{m_1x_2 + x_1} = m_1(x_2-1) + x_1$ and $\pi_{m_1(x_2-1) + x_1} = m_1x_2 + x_1$. That is, each internal point in the TSP instance is converted to two adjacent items (pairs of adjacent items in Fig.~\ref{fig:ltr-tsp}, right) that must be exchanged. We refer to each pair of the adjacent items to be exchanged as a \emph{cluster}.

\begin{figure}[ht]
\begin{center}
\begin{overpic}[width={\iftwocolumn 1\columnwidth \else 5in \fi},tics=5]
{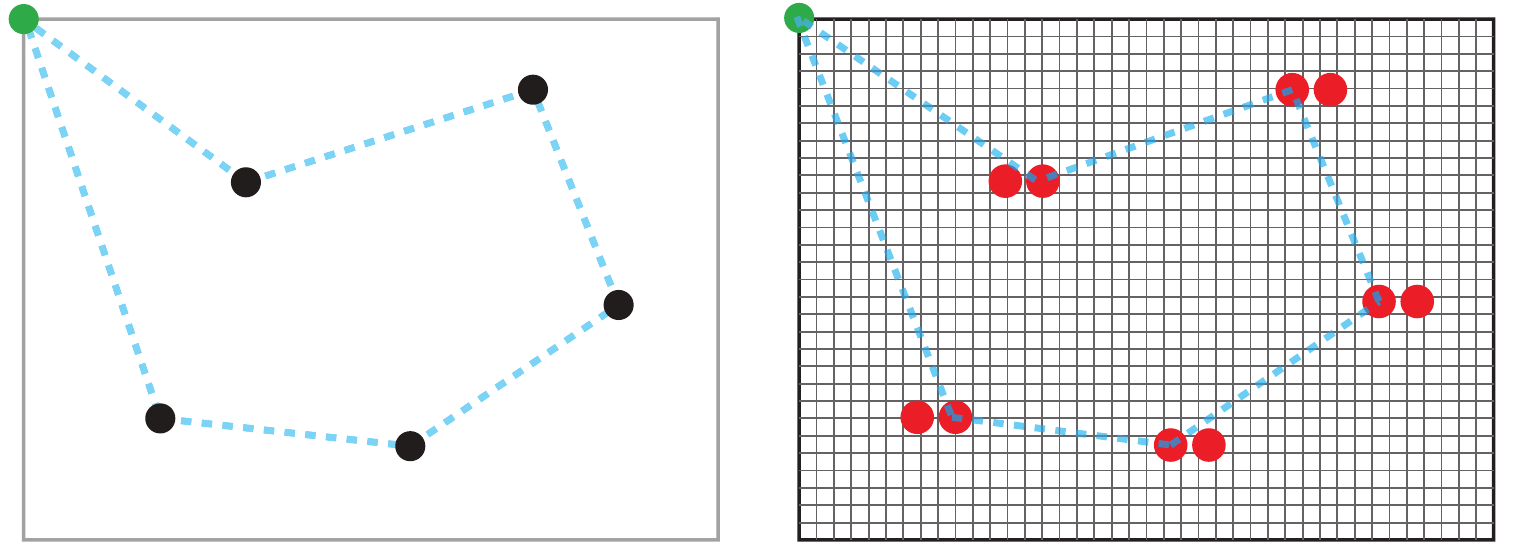}
\end{overpic}
\end{center}
\caption{\label{fig:ltr-tsp}[left] An Euclidean TSP instance, fully specified by a set of six points. [right] A corresponding \pltr instance where each point of the TSP  instance inside the rectangle is replaced by two items that must be exchanged. The top left point corresponds to the end-effector's rest position.}
\end{figure}
By selecting sufficiently large $m_1$ and $m_2$, the end-effector travel cost for solving each cluster becomes negligible. Therefore, for the \pltr instance, the optimal end-effector travel cost is determined by the cost of end-effector travel between clusters. An optimal solution to the TSP problem then maps to an optimal solution to the \pltr instance that minimizes end-effector travel (minor details are omitted). 

On the other hand, in any valid solution to the \pltr instance, the end-effector must start from the rest position (the top left point), go to each cluster to make the exchange, and eventually return to the rest position. Because the travel cost for solving each cluster locally is negligible (again, some minor but formal arguments are omitted here), a distance optimal solution then translates back to an optimal solution to the initial Euclidean TSP instance. ~\qed
\end{proof}

It is clear that the decision version of the \pltr instance is NP-complete because the distance of a given rearrangement plan can be checked in linear time.

\begin{remark}
While \plor (and \ppor) can be optimally solved in polynomial time, \pltr becomes NP-hard, implying that we cannot optimally solve it exactly in polynomial time unless P $=$ NP. What is changed is that \plor is a 1D problem whereas \pltr is a 2D problem.
Such transitions of computational complexity due to dimension changes are also observed elsewhere, e.g., computing shortest paths or the visibility graph becomes hard as dimension rises from two to three \cite{canny1987new,mitchell2004new}. 
\end{remark}

Similar to the case for \plor, we make the observation that the algorithms for \pltr simultaneously optimize the two terms in Eq.~\eqref{eq:cost}: both \cstd and \cswtd ensure that the number of pick-n-swaps is minimized and the total end-effector travel is optimal in the asymptotic sense. As a result, these algorithms provide global optimality guarantees for \pltr.  

\begin{corollary}\label{c:pltr-global}
For \pltr, in minimizing Eq.~\eqref{eq:cost}, \cstd and \cswtd compute globally optimal solutions in the asymptotic sense. 
\end{corollary}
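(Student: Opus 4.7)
The plan is to exhibit two independent lower bounds on $J_T(P) = Nc_p + \sum dist \cdot c_t$ that \cstd and \cswtd saturate up to lower-order terms in expectation. Let $N_{\min}(\pi)$ denote the cycle-based pick-n-swap lower bound (valid for \emph{any} plan by the same argument as Proposition~\ref{p:lor-p} generalized to 2D, and used implicitly in Proposition~\ref{p:ltr-d}), and define $D_{\min}(\pi) := \sum_{i} \|\pi^{-1}_i - i\|_2$, the total Euclidean displacement of items. Because items can be relocated only while being carried by the end-effector, every plan must accrue end-effector travel of at least $D_{\min}(\pi)$. Consequently the globally optimal cost satisfies $\mathrm{OPT}(\pi) \ge N_{\min}(\pi)\,c_p + D_{\min}(\pi)\,c_t$.

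\textbf{Matching the lower bound.} Next, I would decompose the cost produced by \cstd (and, by extension, \cswtd) as $N_{\min}(\pi)\,c_p + D_{\min}(\pi)\,c_t + E(\pi)\,c_t$. The first term is exact by Proposition~\ref{p:ltr-d}. The second term is exact because, within each cycle, the end-effector carries item $i$ along a segment of length $\|\pi^{-1}_i - i\|_2$ (the 2D analogue of the cycle-local argument used in the proof of Theorem~\ref{t:cg-opt}); summing over cycles recovers $D_{\min}(\pi)\,c_t$. The residual $E(\pi)$ is purely empty-handed travel for hopping between cycles. Bounding $E(\pi)$ by (number of cycles) $\times$ (lattice diameter) and invoking the expected cycle count $\mathbb E[H_{m_1 m_2}] = O(\log(m_1 m_2))$ from Proposition~\ref{p:lor-exp} yields $\mathbb E[E(\pi)] = O\bigl((m_1 + m_2)\log(m_1 m_2)\bigr)$. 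For \cswtd the bound only improves, since the directed MST it builds minimizes precisely this overhead.

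\textbf{Asymptotic comparison.} Taking expectations and dividing, the competitive ratio is
\[
\frac{\mathbb E[J_T]}{\mathbb E[\mathrm{OPT}]} \;\le\; 1 + \frac{\mathbb E[E(\pi)]\,c_t}{N_{\min}(\pi)\,c_p + \mathbb E[D_{\min}(\pi)]\,c_t}.
\]
From the proof of Proposition~\ref{p:ltr-d}, $\mathbb E[D_{\min}(\pi)] = \Omega(m_1^2 m_2)$ (assuming without loss of generality $m_1 \ge m_2$), which dominates $O((m_1+m_2)\log(m_1 m_2))$ as $m_1 m_2 \to \infty$. Hence the ratio tends to $1$, delivering the asymptotic global optimality claimed. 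The argument is the direct 2D counterpart of the reasoning behind Corollary~\ref{c:plor-global}.

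\textbf{Main obstacle.} The one delicate step is justifying that no part of the in-cycle carrying distance $D_{\min}(\pi)$ is inflated or double-counted by the cycle-switching logic, so that all empty travel really can be charged to $E(\pi)$. For \cstd this is immediate because the end-effector visits each cycle's entry cell at most twice. For \cswtd I would observe that any detour it introduces replaces a pair of \cstd transitions with a shorter path, so $E_{\cswtd}(\pi) \le E_{\cstd}(\pi)$ pointwise, and the same overhead bound transfers. After this reduction, everything else is a routine asymptotic estimate.
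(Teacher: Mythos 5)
Your argument is correct and follows essentially the same route as the paper: the paper obtains this corollary by observing that the two terms of Eq.~\eqref{eq:cost} are optimized simultaneously --- the pick-n-swap count exactly (matching the cycle lower bound) and the end-effector travel asymptotically, since the $O(m_1\log m_1)$ inter-cycle overhead is dominated by the $\Omega(m_1^2 m_2)$ expected carrying distance established in Proposition~\ref{p:ltr-d}. Your write-up simply makes explicit the lower bound $\mathrm{OPT}(\pi) \ge N_{\min}(\pi)\,c_p + D_{\min}(\pi)\,c_t$ and the cost decomposition that the paper leaves implicit, which is a sound (and arguably more careful) rendering of the same proof.
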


\subsubsection{Labeled Rearrangement in Higher Dimensions.}
We briefly discuss extensions to dimensions higher than two. It is straightforward to observe that Proposition~\ref{p:ltr-d} and Theorem~\ref{t:ltr-hard} continue to hold in higher dimensions through a direct embedding, i.e., a two-dimensional problem can be readily reduced to a $d$-dimensional problem via adding additional orthogonal dimensions. 

\begin{corollary}\label{c:ltr-d}
For labeled rearrangement on $d$-dimensional lattices, with a fixed $d \ge 2$, a cycle-following procedure, after minimizing the number of pick-n-swaps, also yields a total end-effector travel distance that is optimal in the asymptotic sense, in expectation.
\end{corollary}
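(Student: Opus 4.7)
The plan is to extend the argument of Proposition~\ref{p:ltr-d} to arbitrary fixed dimension $d \ge 2$, exploiting the fact that the cycle structure of a permutation is a purely algebraic invariant (independent of the ambient geometry), while the relevant geometric quantities scale predictably with the lattice's largest side length. The cycle-following procedure I have in mind is the natural generalization of \cstd: enumerate the cycles of $\pi$ in some canonical order (say by lexicographic order of their smallest cell index under column-major labeling) and resolve each cycle greedily before moving to the next.

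First, minimization of the number of pick-n-swaps transfers verbatim from Proposition~\ref{p:lor-p}. For each cycle of $\pi$, the first item on the cycle must be picked up without any other item of the same cycle already held by the end-effector, forcing exactly one ``extra'' pick per cycle. Summing over all cycles gives a lower bound that cycle-following attains.

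Second, to establish asymptotic distance optimality in expectation, I would decompose the total end-effector travel into (a) distance accrued while carrying items along cycles, and (b) overhead distance between cycles (including travel from and back to the rest position). Let $m := \prod_{j=1}^d m_j$ and, without loss of generality, assume $m_1 \ge m_2 \ge \cdots \ge m_d$. For (a), each displaced item $i$ must be transported from its initial cell $\pi^{-1}_i$ to cell $i$, so the per-item travel is at least $\|\pi^{-1}_i - i\|_2$, and cycle-following exactly realizes this lower bound. For a uniformly random $\pi$, $\pi^{-1}_i$ is uniformly distributed over the lattice, and since the $L_2$ distance between two points is sandwiched between the largest coordinate gap and $\sqrt{d}$ times that gap, coordinatewise linearity of expectation gives $\mathbb{E}\|\pi^{-1}_i - i\|_2 = \Theta(m_1)$ for fixed $d$. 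Summing over all $i$ yields an expected within-cycle travel of $\Theta(m \cdot m_1)$.

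For (b), the expected number of cycles in a uniformly random permutation of $m$ elements is $H_m = O(\log m)$, and each between-cycle transition incurs at most the lattice diameter, which is $O(\sqrt{d}\, m_1) = O(m_1)$ for fixed $d$. Hence the total overhead is $O(m_1 \log m)$, which is dominated by the within-cycle travel since $(m \cdot m_1) / (m_1 \log m) = m / \log m \to \infty$. The optimality ratio therefore tends to $1$ in expectation, yielding the claimed asymptotic optimality. The main (minor) subtlety is justifying that the expected $L_2$ distance between two uniformly random lattice points in a box with unequal side lengths is $\Theta(m_1)$; this is routine once one observes that the largest-coordinate contribution alone already has expectation $\Theta(m_1)$ and the dimension-dependent constants do not affect the asymptotic ratio.
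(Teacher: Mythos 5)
Your proposal is correct and follows essentially the same route as the paper, which justifies Corollary~\ref{c:ltr-d} as a direct extension of the argument for Proposition~\ref{p:ltr-d}: the expected within-cycle carrying distance is $\Theta(m\cdot m_1)$ (with $m_1$ the largest side length and $m$ the total item count), while the between-cycle overhead is only $O(m_1\log m)$ since a random permutation has $O(\log m)$ cycles, so the ratio tends to $1$. You merely make explicit the $\Theta(m_1)$ estimate for the expected distance between a fixed cell and a uniformly random cell, a detail the paper leaves implicit.
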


\begin{corollary}\label{c:ldr-hard}
Minimizing the total end-effector travel distance for labeled rearrangement on $d$-dimensional lattices, with a fixed $d \ge 2$, is NP-hard. 
\end{corollary}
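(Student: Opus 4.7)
The plan is a direct polynomial-time reduction from \pltr, whose NP-hardness has just been established in Theorem~\ref{t:ltr-hard}. Given an \pltr instance with lattice dimensions $m_1 \times m_2$, permutation $\pi$, and top-left rest position, I would construct a $d$-dimensional instance with dimensions $m_1 \times m_2 \times m_3 \times \cdots \times m_d$, where $m_3, \ldots, m_d$ are chosen to be any small constants permitted by the formulation (e.g., $m_i = 1$ if allowed, otherwise $m_i = 2$). Embed the \pltr instance in the ``slice'' $S := \{(i_1, i_2, 1, \ldots, 1) \mid 1 \le i_1 \le m_1,\ 1 \le i_2 \le m_2\}$ by assigning labels to cells according to the $d$-dimensional column-major convention, setting the initial configuration on $S$ to realize $\pi$ (after relabeling), and setting every item outside $S$ to already match its goal cell (the identity permutation on the complement of $S$). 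The rest position is placed at the corresponding top-left corner of $S$. This construction takes polynomial time in the size of the input \pltr instance.

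The key step is to argue that the optimal end-effector travel for the constructed $d$-dimensional instance equals the optimal travel for the original 2D instance. Since no item outside $S$ needs to move, any plan can be converted into one that never leaves $S$ without increasing the travel distance: given a rearrangement plan, erase any maximal sub-trajectory whose endpoints lie in $S$ but whose interior exits $S$, and replace it by the straight-line (Euclidean) segment inside $S$ between those endpoints. Because $S$ is a convex subset of $\R^d$ containing both endpoints, this replacement cannot increase Euclidean length and is still a feasible pick-n-swap sequence (all pick/swap events already occurred at lattice points in $S$). Iterating yields a plan entirely inside $S$ whose projection onto the first two coordinates is a valid plan for the original \pltr instance with equal distance. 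Conversely, any \pltr plan trivially lifts to a plan in $S$ of the same distance. Therefore the two optima coincide, and a polynomial-time algorithm for optimal $d$-dimensional rearrangement would solve \pltr optimally, contradicting Theorem~\ref{t:ltr-hard} unless P $=$ NP.

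The only subtlety, and the step most deserving of care, is handling the ``lattice is at full capacity'' assumption in higher dimensions together with the column-major labeling: padding cells must be assigned labels so that their initial and goal positions coincide, which requires a slightly careful bookkeeping of the relabeling between the $2$D and $d$-dimensional instances. This is routine but is the one place where a sloppy construction could accidentally introduce forced rearrangement outside $S$ and break the argument. With this bookkeeping in place, Corollary~\ref{c:ldr-hard} follows immediately, and membership of the decision version in NP is clear since a candidate plan's length can be checked in linear time.
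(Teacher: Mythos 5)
Your reduction is exactly the paper's argument: the paper dispatches this corollary with the one-line observation that a two-dimensional instance embeds directly into a $d$-dimensional lattice by adding orthogonal dimensions, which is precisely your slice-$S$ construction with identity padding outside the slice. Your write-up is in fact more careful than the paper's (the convexity/projection argument and the labeling bookkeeping are spelled out, modulo the equally minor point that an adversarial plan could in principle perform wasteful swaps outside $S$, which is ruled out at the same level of rigor the paper uses elsewhere), so this is correct and essentially the same approach.
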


\subsection{Partially-Labeled 2D (\pptr) and Higher Dimensions}
Our main focus on \pptr is finding near-optimal solutions. We present such an algorithm for minimizing the number of pick-n-swaps and show that it is distance-optimal in the asymptotic sense, in expectation, for two common goal configuration patterns shown in Fig.~\ref{fig:pptr}. In goal configuration pattern $A$, items of the same type are aggregated in both dimensions. In pattern $B$, each type occupies a single column of the lattice. 
For notational convenience, it is assumed that the number of types $k = m_1 = m_2$ is a perfect square, e.g., $k = 4 = 2^2$.
Our analysis does not depend on this last assumption.

\begin{figure}[ht]
\begin{center}
\begin{overpic}[width={\iftwocolumn 1\columnwidth \else 5in \fi},tics=5]
{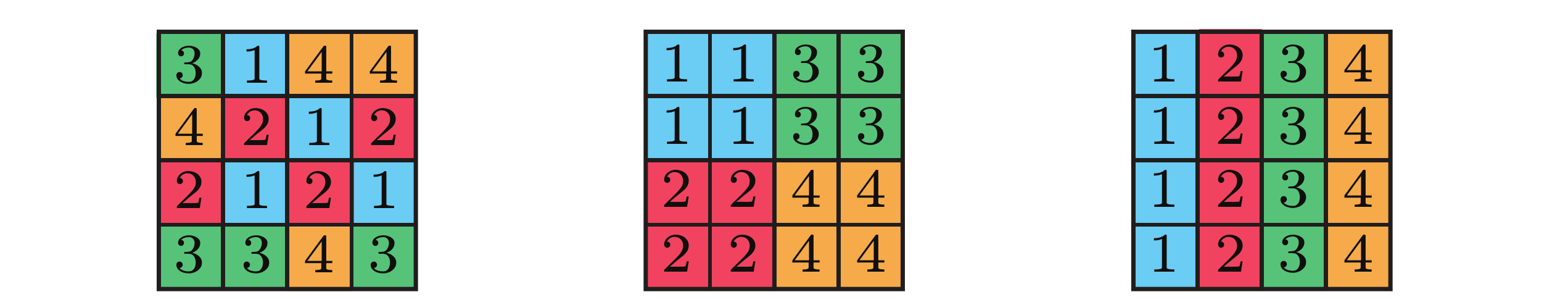}
\end{overpic}
\end{center}
\caption{\label{fig:pptr} [left] An example of a \pptr start configuration. [middle] The corresponding goal configuration pattern $A$. [right] The corresponding goal configuration pattern $B$.}
\end{figure}
For \pptr, minimizing the number of pick-n-swaps can be realized in polynomial time, as it is for \ppor: for each type, matchings can be made to yield cycles with minimum end-effector travel. These cycles can be merged using a procedure similar to \cyclemerge and \cyclemergeMST, which minimizes the number of pick-n-swaps. 

On the other hand, similar to \pltr, minimizing the end-effector travel is computationally intractable, regardless of whether the number of pick-n-swaps is minimized. This is true because  \pltr is a special case of \pptr. 

\begin{corollary}\label{c:ptr-hard}
Computing a rearrangement plan for minimizing the travel distance for \pptr is NP-hard. 
\end{corollary}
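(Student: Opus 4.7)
The plan is to obtain the hardness result essentially for free by a direct reduction from \pltr, leveraging Theorem~\ref{t:ltr-hard}. The key observation is that \pltr is literally a special case of \pptr: a fully-labeled instance is exactly a partially-labeled instance in which every ``type'' happens to contain a single item. Since the partially-labeled formulation admits arbitrary goal configurations (as stated in Sec.~\ref{sec:problem}), the reduction amounts to reinterpretation rather than a combinatorial construction.

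First I would make the reduction explicit. Given any \pltr instance specified by lattice dimensions $m_1, m_2$ and a permutation $\pi$ of $[m_1 m_2]$, construct a \pptr instance with the same lattice, with $k = m_1 m_2$ item types, and with $n_t = 1$ item of each type $t$. Use $\pi$ to place the unique type-$t$ item at the same initial cell that item $t$ occupies in the \pltr instance, and designate the same target cell as the (unique) goal cell for type $t$. This construction is clearly polynomial in the input size.

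Next I would verify that the reduction preserves optimal values. Because each type has only one item, the only feasible matching between items and goals in the \pptr instance is the matching given by $\pi$, so the set of valid rearrangement plans for the \pptr instance coincides exactly with the set of valid rearrangement plans for the original \pltr instance. Consequently, a plan minimizing end-effector travel for one minimizes it for the other, and any polynomial-time algorithm solving the \pptr travel-minimization problem optimally would immediately yield a polynomial-time optimal algorithm for \pltr, contradicting Theorem~\ref{t:ltr-hard} unless $\mathrm{P} = \mathrm{NP}$. NP-membership of the decision version is immediate, since the cost of a proposed plan can be computed in linear time, so the problem is NP-complete in its decision form and NP-hard as stated.

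The only conceptual subtlety, and the part I expect to require a sentence of care, is whether the formal definition of \pptr truly allows the degenerate case $k = m$ in which every type is a singleton. If one insists on the strict interpretation that some type must have at least two items, I would add a trivial pair of same-type items in an isolated corner of an enlarged lattice (for instance, by padding with two extra columns whose cells are already at their goals except for one swap at the far edge). The padding does not affect the asymptotic cost structure, so the TSP-inherited hardness from Theorem~\ref{t:ltr-hard} still transfers. Aside from this bookkeeping, the corollary follows directly from the fact that \pltr embeds into \pptr.
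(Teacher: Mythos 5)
Your proposal is correct and is essentially the paper's own argument: the paper justifies Corollary~\ref{c:ptr-hard} with the single observation that \pltr is a special case of \pptr (each item forming its own singleton type), so Theorem~\ref{t:ltr-hard} transfers directly. Your extra paragraph on padding the instance when the definition insists on $k<m$ or on types of size at least two is a reasonable piece of bookkeeping that the paper simply glosses over; it does not change the substance of the reduction.
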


We note that we can also show that the hardness results continue to hold for goal configuration patterns $A$ and $B$. For pattern $A$, we may apply the same proof used for proving Theorem~\ref{t:ltr-hard} by treating each $\sqrt{k}\times \sqrt{k}$ cell as a single cell, which reduces the \pptr problem to an \pltr problem. For pattern $B$, the proof for Theorem~\ref{t:ltr-hard} directly applies. 

Next, we  describe a general algorithm for \pptr, which does not depend on the goal configuration pattern. The algorithm is similar to \mnmc for \ppor but with only three phases; the two cycle-merging phases in \ppor are merged into a single phase. In the first phase, for each type of item that needs to be rearranged, a bipartite graph is constructed that connects all applicable initial configurations to all goal configurations, with the edge weight being the distance between a pair of start and goal configurations. Matching is then performed on this bipartite graph to get a $1$-$1$ mapping between initial and goal configurations. This can be done efficiently using the Hungarian algorithm \cite{kuhn1955hungarian}. After the first phase, denoted as \cfptr, a set of initial cycles, $C$, is formed. 

In the second phase, cycles in $C$ are merged through goal swaps. Two cycles can be merged through goal swaps (similar to what is done in Fig.~\ref{fig:por-phases} [bottom left] $\to$ Fig.~\ref{fig:por-phases} [bottom right]) if they contain items of the same type. Unlike \ppor, where some merges do not change end-effector travel distance, a merge here will cause the total distance to increase in general. Therefore, only a single cycle merging phase is done for \pptr. The procedure for doing so is the same as \cyclemergeMST for \ppor: a graph $G$ is constructed where each cycle is a node and there is an edge between each pair of mergeable cycles with the edge weight being the additional distance that is incurred for the merge, due to goal swaps. A minimum spanning forest is then constructed for merging all mergeable cycles. After the second phase, which we call \mcptr, no more than $k/2$ cycles are left and the number of pick-n-swaps is minimized. Let this set of cycles be $C'$. 
 
In the third and last phase, cycles in $C'$ are again connected to form a complete rearrangement plan. This is realized through another round of minimum spanning tree computation, for which another graph $G'$ is needed for capturing the distance between cycles. Here, for two cycles $c_1$ and $c_2$, the distance between them is simply $\min_{v_1\in c_1, v_2 \in c_2} dist(v_1, v_2)$. Similar to the \pltr case, here, the distance between two cycles is \emph{directed}. Therefore, $G'$ is also directed. We also include $p_0$, the end-effector's initial location, as a vertex in $G'$ and compute its distance to cycles in $C'$ in the same way. After the minimum spanning tree $T$ is computed for $G'$, a double covering of this tree starting at $p_0$, going through the cycles and back, yields a complete rearrangement plan with the minimum number of pick-n-swaps. Denoting the last phase as \scptr and the overall algorithm \planptr (we omit the algorithm outline since it is fairly similar to \mnmc), we proceed to analyze the distance optimality. 

\begin{theorem}\label{t:pptr-opt}
\planptr computes distance-optimal solutions in the asymptotic sense for \pptr with goal configuration patterns $A$ and $B$, in expectation.  
\end{theorem}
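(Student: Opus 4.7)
The plan is to show that $\mathbb{E}[\text{travel produced by \planptr}] = (1+o(1))\,\mathbb{E}[\mathrm{OPT}]$ as $k\to\infty$ by decomposing the algorithm's end-effector travel into three contributions and comparing each against a natural lower bound on $\mathrm{OPT}$. First I would split the algorithm's total travel into: (i) within-cycle travel after \cfptr, (ii) the added travel from \mcptr due to merging cycles through goal swaps, and (iii) the inter-cycle travel produced by \scptr. Contribution (i) equals, by construction, the sum over types of the Hungarian assignment cost between initial and goal cells of that type, because within each formed cycle the end-effector carries each item exactly along its matched edge (the 2D analogue of the argument used in Proposition~\ref{p:cyclemerge}).

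Next I would obtain a matching lower bound on $\mathrm{OPT}$. Any valid plan must, for each type $t$, transport the type-$t$ items from their initial cells to some permutation of type-$t$ goal cells, and the item-carrying travel needed for this subtask is bounded below by the optimal bipartite assignment cost, which is exactly what \cfptr computes. Hence $\mathrm{OPT} \ge \sum_t \text{Hungarian}_t=\text{contribution (i)}$, and it suffices to show that contributions (ii) and (iii) are $o(\mathrm{OPT})$ in expectation.

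I would then estimate each piece. For pattern $A$, a constant fraction of the items of a random type $t$ lie outside type $t$'s $\sqrt{k}\times\sqrt{k}$ goal block at expected $L_2$ distance $\Theta(k)$, so $\mathbb{E}[\text{(i)}]=\Theta(k^{3})$; the same scaling holds for pattern $B$ since each type's fixed goal column is at expected column-distance $\Theta(k)$ from a random initial cell. For the overhead, after \mcptr there are at most $O(k)$ cycles (a direct 2D extension of Proposition~\ref{p:cyclemerge}), each merge edge has weight at most the lattice diameter $O(k)$, and \scptr connects the remaining $O(k)$ cycles via a directed MST plus double-covering; thus (ii) and (iii) are each deterministically $O(k^{2})$. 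Since $k^{2}=o(k^{3})$, the ratio (algorithm cost)$/\mathrm{OPT}$ tends to $1$ in expectation.

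The main obstacle will be moving cleanly from ``$(1+o(1))$ in probability'' to ``$(1+o(1))$ in expectation'', which I would handle by pairing the deterministic $O(k^{2})$ overhead bounds with a second-moment argument showing that $\sum_t \text{Hungarian}_t$ concentrates around its mean $\Theta(k^{3})$ under a uniformly random permutation; the per-item distances are $\Theta(k)$ with weak negative correlation, so Chebyshev-type bounds suffice. A secondary subtlety is that double-covering a directed MST is only $2$-approximate as a tour, but a constant factor on an $O(k^{2})$ quantity is harmlessly absorbed into the $o(k^{3})$ overhead.
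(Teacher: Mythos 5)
Your overall architecture matches the paper's: decompose the travel into the within-cycle cost after \cfptr, the goal-swap overhead of \mcptr, and the inter-cycle travel of \scptr, then show the last two are lower order than the $\Theta(k^3)$ main term. Your lower bound $\mathrm{OPT}\ge\sum_t \mathrm{Hungarian}_t$ is in fact a cleaner certificate than the paper's (which only argues $\mathbb E[\mathrm{OPT}]=\Omega(k^3)$ from per-item expected displacement), and the treatment of \scptr ($O(k)$ surviving cycles, diameter-length edges, double cover, hence $O(k^2)$) agrees with the paper.

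There is, however, a genuine gap in your bound on contribution (ii). You charge ``$O(k)$ cycles $\times$ $O(k)$ per merge edge $= O(k^2)$,'' but the $O(k)$ (more precisely $k/2$) count is the number of cycles \emph{remaining after} \mcptr, i.e., the number of trees in the spanning forest. The number of merge operations is the number of \emph{edges} of that forest, which is (number of cycles produced by \cfptr) minus (number of trees), and \cfptr can produce up to $\Theta(m)=\Theta(k^2)$ cycles since each nontrivial cycle need only contain two of the $k^2$ items. With your per-merge bound of the lattice diameter $O(k)$, the overhead becomes $O(k^3)$ --- the same order as the main term, so the argument collapses. The paper closes this with two pattern-specific estimates that your proposal is missing: for pattern $A$, each goal swap exchanges two goals lying in the same $\sqrt{k}\times\sqrt{k}$ block, so the triangle inequality gives $O(\sqrt{k})$ per merge and $O(k^{5/2})=o(k^3)$ in total; for pattern $B$, an individual swap can indeed cost $\Theta(k)$, but because no two swaps for the same type cross in the vertical direction, the added distances telescope to $O(k)$ per type and $O(k^2)$ overall. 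You need one of these (or an equivalent amortization) to make (ii) negligible. Your concentration/second-moment discussion is a reasonable way to handle the expectation, though the paper sidesteps it by comparing expectations directly against the deterministic overhead bounds.
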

\begin{proof}
We first examine pattern $A$. Intuitively, the required amount of distance for moving items to a proper goal dominates other distances. To establish this, we estimate the different costs. For a single item of a given type, the initial location can be anywhere in the lattice. Therefore, the expected distance for restoring it, regardless of where the goal is, is $\Omega(k)$. The total cost, in expectation, is then $\mathbb E = \Omega(k^3)$. It is clear that the cycles computed by \cfptr, in expectation, have a total length no more than $\mathbb E$. Because the \mcptr subroutine only swaps goals within a $\sqrt{k}\times \sqrt{k}$ square region, each swap will add at most $O(\sqrt{k})$ additional distance (we omit the straightforward computation based on the triangle inequality). Therefore, \mcptr will add at most $O(k^{5/2})$ distance. In the last phase, because at most $k/2$ cycles are connected, \scptr incurs an additional connection distance cost of $O(k^2)$. Because \mcptr and \scptr only add costs that are asymptotically inconsequential as compared to $\mathbb E$, \planptr is distance-optimal in the asymptotic sense for \pptr in expectation with goal configuration pattern $A$. 

For pattern B, it is clear that the expected cost remains at $\mathbb E = \Omega(k^3)$. For \mcptr, although goal swaps may happen over a distance of up to $k$, we note that no two different swaps will ever cross each other in the vertical direction. We readily see that (again, via an application of the triangle inequality) the cumulative distance increase per type is bounded by $2k$. The total additional cost over all types due to \mcptr is then bounded by $O(k^2)$. For pattern $B$, \scptr essentially goes from the leftmost column to the rightmost and back, which incurs $O(k)$ distance. Therefore, \planptr is distance-optimal in the asymptotic sense, in expectation, for \pptr with goal configuration pattern $B$. ~\qed
\end{proof} 

\begin{remark}\label{r:pptr}
Similar to the labeled setting, the structural results obtained for \ppor and \pptr readily extend to higher dimensions.
For patterns $A$ and $B$, it is clear that \planptr optimizes Eq.~\eqref{eq:cost} globally. 
\end{remark}

\section{Simulation Studies}\label{sec:eval}
In this section, based on simulation studies, we highlight some properties of the rearrangement problems and corroborate the guarantees provided by our algorithms. We implemented all algorithms described in the paper in Python. Each data point presented in a figure is an average of over $100$ randomly-generated instances according to some distribution to be stated. 
We mention that our basic Python implementation is fairly efficient; each instance, with some containing $10000$ items, is solved within $1$ second. For practical-sized problems with a few hundred items, each takes less than $10^{-3}$ second to solve. We do not present the computation time here as it will not be representative of an optimized implementation in C/C++. 
The source code, with implementations of both greedy and optimized/optimal algorithms for \plor/\ppor/\pltr/\pptr problems, is available at \url{https://github.com/arc-l/lattice-rearrangement/}.

For \plor and \pltr, since cycle following is a natural strategy that minimizes the number of pick-n-swaps, only  end-effector travel is examined here. 
In Fig.~\ref{fig:lor-simu} [left], \plor instances are generated following the uniform random distribution. We then take the end-effector travel distance computed by \cycleGS and divide it by $m^2$. The figure shows that the ratio converges to $1/3$ (the gray dotted horizontal line) as expected. We further note that the ratio's range, between $0.33$ and $0.345$, is very small. A contributing reason that the total travel cost is close to $m^2/3$, even when $m$ is small, is that there are not many cycles so the distance due to traveling between cycles is very minimal. 

\begin{figure}[h!]
\begin{center}
\begin{overpic}[width={\iftwocolumn 1\columnwidth \else 5in \fi},tics=5]
{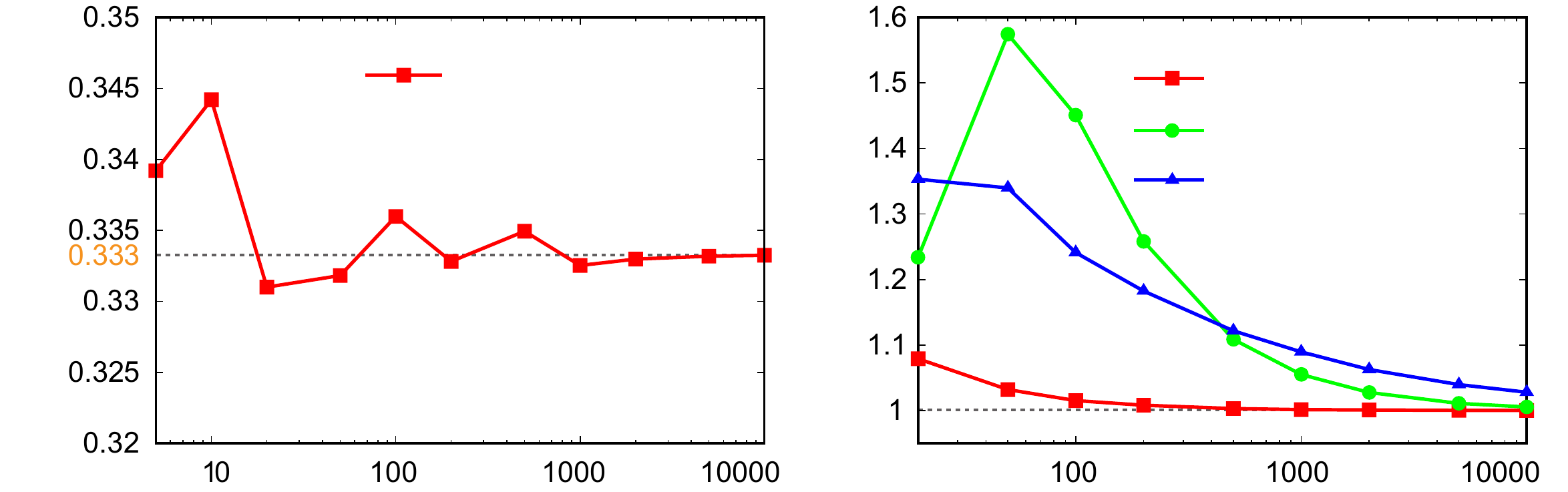}
\put(29, 26){{\scriptsize uniform random}}
\put(77.6, 26.2){{\scriptsize uniform random}}
\put(0,7.5){\rotatebox{90}{{\scriptsize Opt. dist. $/ m^2$}}}
\put(51.7,4.5){\rotatebox{90}{{\scriptsize Greedy / opt. (dist.)}}}
\put(83.5, 22.5){{\scriptsize $10$-random}}
\put(81.2, 19){{\scriptsize $\sqrt{m}$-random}}
\end{overpic}
\end{center}
\caption{\label{fig:lor-simu}Two plots illustrating properties of \plor and the associated algorithms. The $y$-axes are unitless. The $x$-axes are the number of items in an instance, which is the case for all figures in this section. [left] The optimal end-effector travel distance for \plor (computed by \cycleGS) divided by $m^2$ where $m$ is the number of items. [right] End-effector travel distance ratio between \cyclesweep and \cycleGS (optimal) for three different initial arrangement patterns.}
\end{figure}

In Fig.~\ref{fig:lor-simu} [right], the ratio of the travel distance between \cyclesweep (non-optimal) and \cycleGS (optimal) is evaluated over three item distribution patterns: uniform random, $10$-random, and $\sqrt{m}$-random, where $x$-random means that every block of $x$ items, counting from the left, are uniformly randomly distributed in the generated \plor instances. \cycleGS does significantly better than the greedy (best-first) \cyclesweep, especially when the number of items $m$ is small, which actually corresponds to more practical settings. 
For the $10$-random and $\sqrt{m}$-random settings, there are more cycles due to the partitioning, therefore allowing more opportunities for cycle switching to engage in \cycleGS, providing more distance savings as a result.

For \pltr, since cycle following is again natural, we focus on end-effector travel (all plans have optimal numbers of pick-n-swaps). On an $m \times m$ square lattice where $m$ is also a perfect square, we evaluate the performance of cycle-following algorithms over three distributions: \emph{uniform random}, \emph{column random}, where items are uniformly randomly distributed within columns, and \emph{block random}, where items are randomized within $\sqrt{m}\times\sqrt{m}$ blocks. Fig.~\ref{fig:ltr-simu} [left] presents the ratio of the distance from all cycles divided by $m^2$, which is the same for \cstd and \cswtd. That is, traveling between cycles is not included. We observe that both uniform random and block random settings have travel distances that converge to about $0.52m^2$ as predicted, whereas the column random setting, essentially a one-dimensional problem, shows convergence to $m^2/3$, also as expected.
\begin{figure}[h!]
\begin{center}
\begin{overpic}[width={\iftwocolumn 1\columnwidth \else 5in \fi},tics=5]
{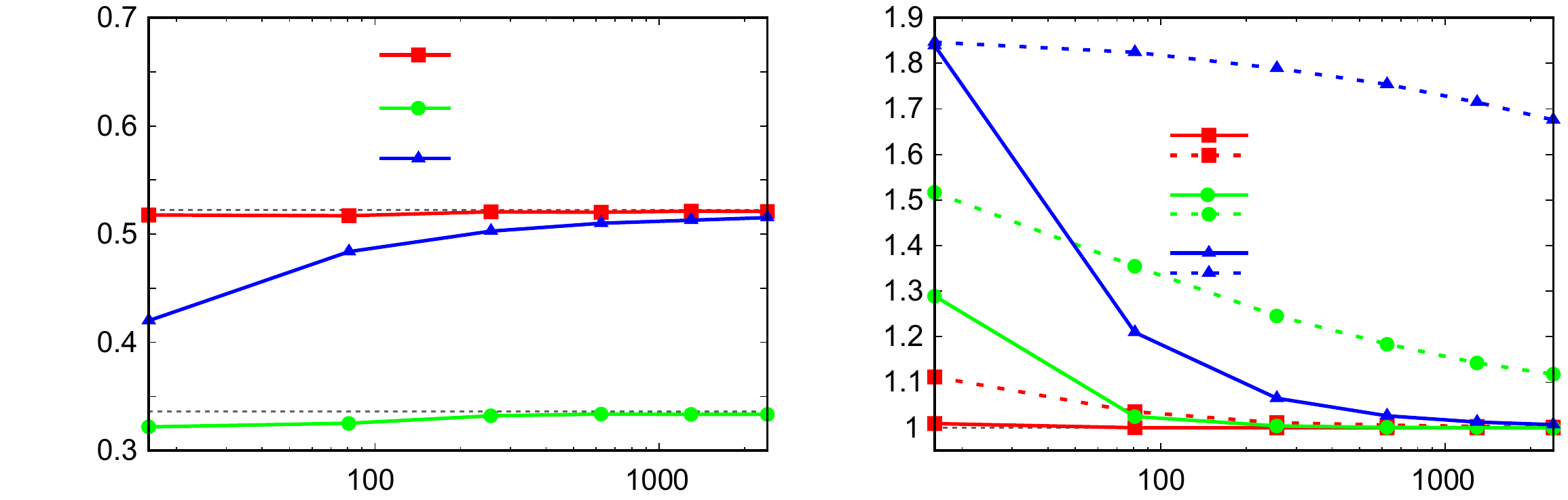}
\put(0,7.5){\rotatebox{90}{{\scriptsize Cycle dist. $/ m^2$}}}
\put(51.7,5.5){\rotatebox{90}{{\scriptsize Total / cycles (dist.)}}}
\put(29.4, 27.6){{\scriptsize uniform random}}
\put(30.1, 24.3){{\scriptsize column random}}
\put(32.3, 21.2){{\scriptsize block random}}
\put(79.8, 22.3){{\scriptsize uniform random}}
\put(80.5, 18.5){{\scriptsize column random}}
\put(82.7, 14.65){{\scriptsize block random}}
\end{overpic}
\end{center}
\caption{\label{fig:ltr-simu}Properties of \pltr and the associated algorithms. [left] Ratio of distance incurred by following cycles versus $m^2$ for three item distributions. Two gray dotted horizontal lines at around 0.52 and $1/3$ are added for reference.  [right] Total distance from algorithms versus the total cycle distances for three distributions. Data for \cstd is plotted using dashed lines and data for \cswtd is plotted using solid lines.}
\end{figure}

In Fig.~\ref{fig:ltr-simu} [right], for each randomness setting, the solid (resp., dashed) line shows the ratio between the distance cost from \cswtd (resp., \cstd) and the distance from cycles only. \cswtd clearly outperforms the greedy \cstd algorithm in all cases. For both uniform random and column  random, \cswtd incurs little extra distance beyond the necessary distance needed for following cycles. For the uniform random setting, different cycles are all entangled in the $m\times m$ square, allowing plenty of opportunities for cycle switching. Similarly, for the column random setting, edges of a cycle in a column are likely to pass closely by a vertex of another cycle in a nearby column, presenting opportunities for cycle switching. 
Cycles in different blocks in the block random setting tend to be more apart; it less likely for the edge of a cycle in an $\sqrt{m}\times \sqrt{m}$ block to pass closely by the vertex of another cycle in another block.

For \ppor and \pptr, we look at both the end-effector travel distance and the number of pick-n-swaps. The ratios of distance and the number of pick-n-swaps between the greedy algorithm and \mnmc are given in Fig.~\ref{fig:por-simu} for different numbers of item types. The optimal \mnmc algorithm is $1$-$2\%$ better than the greedy algorithm on distance, and up to over $5\%$ better on the number of pick-n-swaps (which carries more importance). 
\begin{figure}[h!]
\begin{center}
\begin{overpic}[width={\iftwocolumn 1\columnwidth \else 5in \fi},tics=5]
{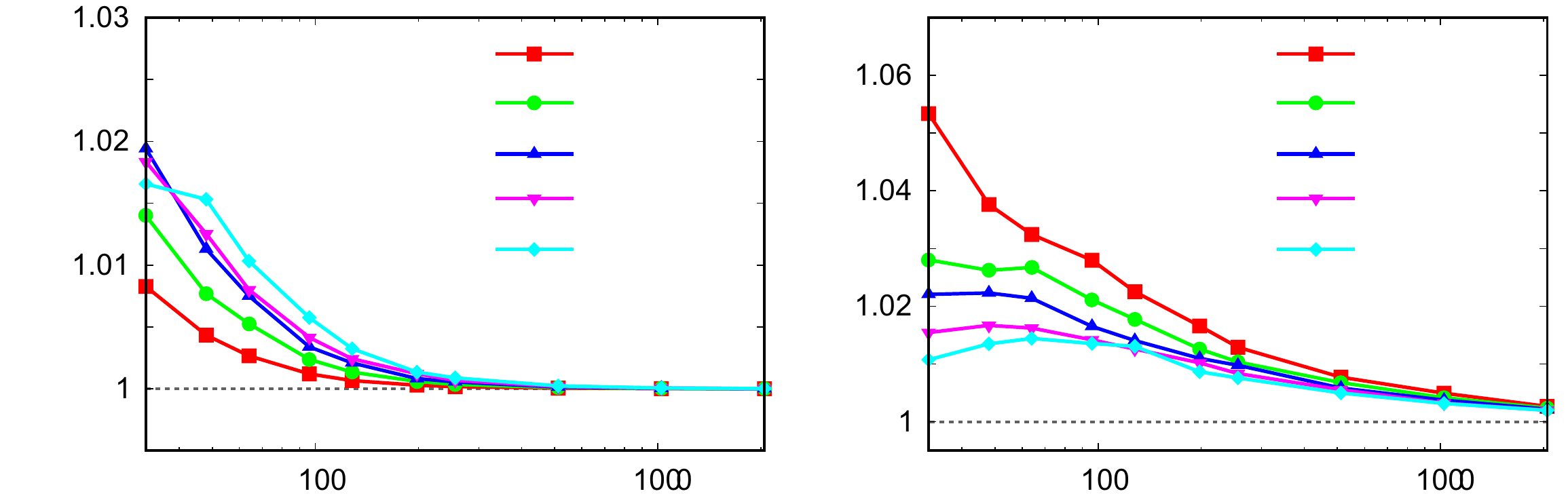}
\put(0,5.5){\rotatebox{90}{{\scriptsize Greedy / opt. (dist.)}}}
\put(51.3,4.5){\rotatebox{90}{{\scriptsize Greedy / opt. (picks)}}}
\put(39, 27.5){{\scriptsize $2$ types}}
\put(39, 24.4){{\scriptsize $4$ types}}
\put(39, 21.3){{\scriptsize $6$ types}}
\put(39, 18.2){{\scriptsize $8$ types}}
\put(37.4, 15.1){{\scriptsize $10$ types}}
\put(89, 27.5){{\scriptsize $2$ types}}
\put(89, 24.4){{\scriptsize $4$ types}}
\put(89, 21.3){{\scriptsize $6$ types}}
\put(89, 18.2){{\scriptsize $8$ types}}
\put(87.4, 15.1){{\scriptsize $10$ types}}
\end{overpic}
\end{center}
\caption{\label{fig:por-simu} Performance of the greedy algorithm versus the optimal algorithm (\mnmc) for \ppor.  [left] End-effector travel distance ratios for different numbers of types. [right] Ratios between the number of pick-n-swaps for the two algorithms for different numbers of types.}
\end{figure}

Fig.~\ref{fig:por-simu} shows that \mnmc is consistently more effective in reducing both the number of pick-n-swaps and the end-effector travel distance as compared with a greedy algorithm. This is not surprising. For \ppor, due to many items having the same types, greedy strategies tend to work more locally, leading to sub-optimality on both the number of pick-n-swaps and the end-effector travel.

For \pptr, while we no longer have algorithms for computing the optimal distance (recall that the problem is NP-hard), our minimum spanning tree-based algorithm still demonstrates a much better performance when compared to greedy best-first approaches, as shown in Fig.~\ref{fig:ptr-simu}, for both patterns (pattern $A$ and pattern $B$). The cost differences, in this case, can be similarly explained as for the \ppor case, where greedy approaches tend to work more locally.

\begin{figure}[h!]
\begin{center}
\begin{overpic}[width={\iftwocolumn 1\columnwidth \else 5in \fi},tics=5]
{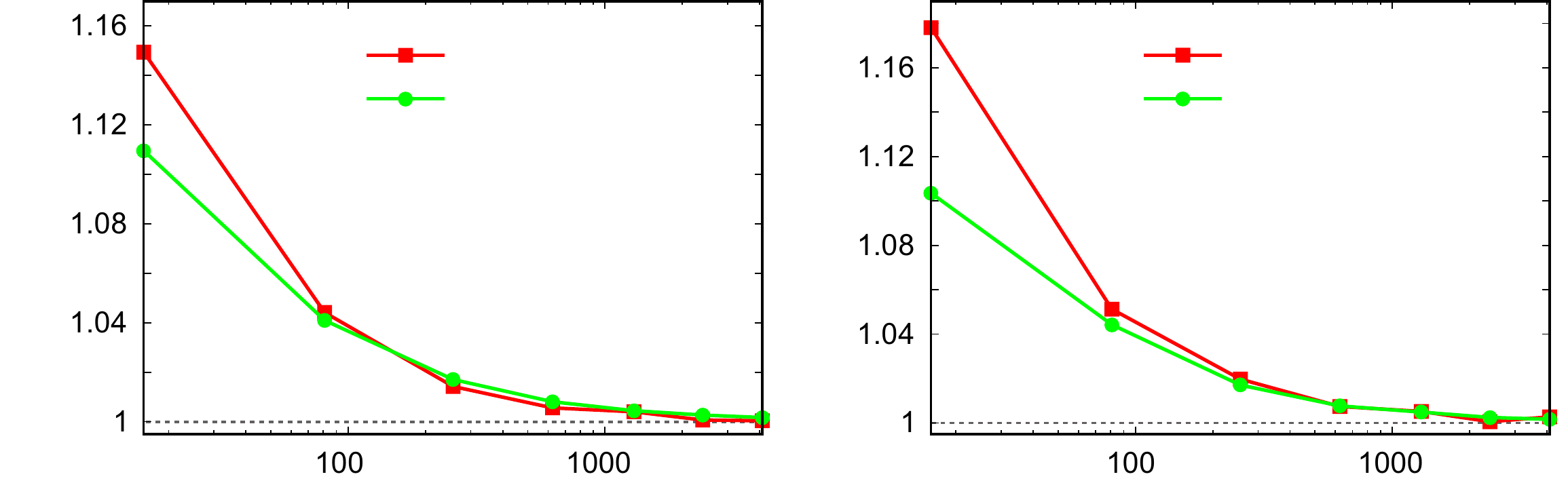}
\put(0,8.5){\rotatebox{90}{{\scriptsize Greedy / MST}}}
\put(51.3,8.5){\rotatebox{90}{{\scriptsize Greedy / MST}}}
\put(29, 27.3){{\scriptsize pattern A, dist}}
\put(29, 24.1){{\scriptsize pattern A, picks}}
\put(78.5, 27.3){{\scriptsize pattern B, dist}}
\put(78.5, 24.1){{\scriptsize pattern B, picks}}
\end{overpic}
\end{center}
\caption{\label{fig:ptr-simu} Performance of the greedy algorithm versus the minimum spanning tree-based cycle merging algorithm for \pptr.  [left] Distance and pick-n-swap ratios for ``block'' item distribution. [right] Distance and pick-n-swap ratios for `column'' item distribution.}
\end{figure}

\section{Conclusions and Discussions}\label{sec:conclusion}
In this paper, we have performed a systematic study of lattice-based robotic rearrangement using the pick-n-swap primitive. For both the fully-labeled and the partially-labeled settings under all lattice dimensions, we either provide efficient algorithms for optimally solving the problem (i.e., \plor, \ppor), or provide algorithms that are optimal in the asymptotic sense when the problem is NP-hard (\pltr, \pptr). We have demonstrated, via simulation, that our algorithms perform fairly well with respect to absolute optimality measures and in comparison with the already decent greedy best-first approaches. For the majority of the settings explored, our algorithms also provide global optimality guarantees. 

In addition to providing characterization and solutions for the specific problems, our analysis points to a general solution structure for such rearrangement problems: forming cycles naturally and then optimally connecting them, e.g., using a minimum spanning tree. Combined with proper analysis, guarantees can often be obtained. We believe this general cycle-following $+$ connecting structural insight applies to enhancing the efficiency in solving rearrangement problems beyond lattice-based settings. 

We conclude the paper with some open-ended discussions. 

\textbf{Non-random item distribution}. The current study assumes that items are randomly distributed in the lattice. If item distribution is not random, in all cases, the number of pick-n-swaps can still be readily minimized. 
For overall optimality, for \plor, the guarantee by \cyclesweep no longer holds, but \cycleGS continues to compute globally optimal solutions. For \ppor, \mnmc also continuous to ensure solution optimality guarantee as before. For \pltr and \pptr, the distance optimality guarantee (in the asymptotic sense) no longer holds. We project, however, that the associated algorithms should continue to compute high-quality solutions.

\textbf{Domain topology}. The lattices examined in this work are embedded in Euclidean spaces. This assumption may be relaxed. For example, an application may call for rearranging items that form a circle. The algorithms developed in this study can be adapted to work for such scenarios with relatively minor changes. The main update surrounds the distance computation for two lattice points, which changes as the domain's topology changes. Depending on whether the end-effector travels along the circle or in straight lines between two consecutive pick-n-swaps, the optimal rearrangement plan will change. If the end-effector travels along the circle, then the optimality guarantee for \plor and \ppor continues to hold (the algorithms will require some minor modifications). If the end-effector travel along straight lines between two points on a circle, the situation is closer to the 2D setting with similar optimality guarantees. 

\textbf{Bi-criteria optimization}. In our treatment of lattice-based rearrangement problems, there exists a fairly good level of flexibility that allows balancing between the two (sometimes competing) objectives in Eq.~\eqref{eq:cost}. For example, in \ppor, a minimum total-distance feasible solution is first computed, allowing a subsequent trade-off between reducing the number of pick-n-swaps and adding additional end-effector travel. If we enforce that the rearrangement task must be completed, then \mnmc (Alg.~\ref{alg:mnmc}) computes the full relevant Pareto frontier. 
On the other hand, our algorithms do not produce the entire Pareto optimal frontier for the two objectives if partial solutions are also considered. 

\textbf{Bounded optimality}. While not a focus of this work, if it is desirable, the algorithms in this work can be shown to provide bounded optimality guarantees, in addition to ensuring optimality in the asymptotic sense. This can be achieved by comparing the extra travel distance with the minimum required distance for realizing the rearrangement task. 

\textbf{Alternative pick-n-place primitives}. We have examined a few other natural pick-n-place primitives. It would appear that the pick-n-swap model provides a very nice balance between the complexity of the system (e.g., end-effector, workspace) design and achievable efficiency. 
For example, if the end-effector cannot swap items, e.g., it moves a picked item to a temporary location if it cannot be directly placed, it will make the system twice as inefficient; it will double the number of pick-n-place operations as picks and places are always executed with end-effector travel in between. The travel distance also doubles as a result. It could be interesting to look at a dual-arm extension of this work. Because the two arms can be at two places, additional efficiency gain should be possible. 
However, because collision avoidance must be considered in a dual-arm setting, the coordination of arms becomes non-trivial and can interfere with the rearrangement planning process. Some related studies on the dual-arm setting but not for lattice rearrangement can be found in \cite{shome2021fast}. 
It is also interesting to examine when a single end-effector can hold $k\ge 2$ items, which should also allow additional efficiency gain. In our preliminary experiments, we noticed additional travel distance savings, as a function of $k$, diminish quickly as $k$ increases.
Both dual-arm and hold-$k$ settings will likely incur more computation costs.

\section*{Acknowledgments}\label{sec:ack}
This work is supported in part by NSF awards IIS-1734419, IIS-1845888, and CCF-1934924. We sincerely thank the anonymous reviewers for bringing up many insightful suggestions and intriguing questions, which have helped improve the quality and depth of the study.

\bibliographystyle{IEEETran}
\bibliography{bib/bib}
\end{document}